%% file: iclr2027_conference.tex
\documentclass{article}
\usepackage{inconsolata}
\usepackage{iclr2027_conference,times}
\iclrfinalcopy 
\input{math_commands.tex}

\usepackage{hyperref}
\usepackage{cleveref}
\usepackage{url}
\usepackage{graphicx}
\usepackage{booktabs,tabularx,array,caption}
\usepackage{xcolor}
\usepackage{algorithm,algpseudocode}
\usepackage{tcolorbox}
\tcbuselibrary{breakable,skins}

\usepackage{amsmath}
\usepackage{amssymb}
\usepackage{mathtools}
\usepackage{amsthm}
\usepackage{multirow}
\usepackage{graphicx}
\usepackage{listings}
\usepackage{enumitem}
\usepackage{wrapfig}
\usepackage{textcomp}
\usepackage{microtype}
\usepackage{inconsolata}

\usepackage{url}            
\usepackage{booktabs}       
\usepackage{amsfonts}       
\usepackage{nicefrac}       
\usepackage{microtype}      
\usepackage{xcolor}         

\usepackage{xcolor}
\usepackage{bm}       
\usepackage{enumitem}
\usepackage{bbm} 
\usepackage{listings}

\title{From Pareto to Preference: Personalized Test-Time Scaling via Amortized Agentic Policy Discovery}

\author {
    \textbf{Xinglin Wang}\textsuperscript{\rm 1}\footnotemark[1], \hspace{0cm}
    \textbf{Zishen Liu}\textsuperscript{\rm 1}\footnotemark[1], \hspace{0cm} 
    \textbf{Tong Zheng}\textsuperscript{}\footnotemark[1], \hspace{0cm} 
    \textbf{Shaoxiong Feng}\textsuperscript{\rm 2}\footnotemark[2], \hspace{0cm}
    \textbf{Peiwen Yuan}\textsuperscript{\rm 1}, \hspace{0cm}
    \textbf{Yiwei Li}\textsuperscript{\rm 1}, \hspace{0cm}
    \\
    \hspace{0.05cm} \textbf{Jiayi Shi}\textsuperscript{\rm 1}, \hspace{0cm}
    \textbf{Yueqi Zhang}\textsuperscript{\rm 1}, \hspace{0cm}
    \textbf{Chuyi Tan}\textsuperscript{\rm 1}, \hspace{0cm}
    \textbf{Ji Zhang}\textsuperscript{\rm 1}, \hspace{0cm}
    \textbf{Boyuan Pan}\textsuperscript{\rm 2}, \hspace{0cm}  
    \textbf{Kan Li}\textsuperscript{\rm 1}\footnotemark[2] \\
    \hspace{0.05cm} \textsuperscript{\rm 1} Beijing Institute of Technology  \quad \textsuperscript{\rm 2} Xiaohongshu Inc \\
    \hspace{0.05cm} \texttt{\{wangxinglin,liuzishen,peiwenyuan,liyiwei\}@bit.edu.cn} \\
    \hspace{0.05cm} \texttt{\{shijiayi,zhangyq,tanchuyi, zhangji, likan\}@bit.edu.cn} \\
    \hspace{0.05cm} \texttt{\{zhengtong12356,shaoxiongfeng2023\}@gmail.com}\\ \hspace{0.05cm} \texttt{panboyuan@xiaohongshu.com}
}
\usepackage{amsthm}

\newtheorem{pttsproposition}{Proposition}

\begin{document}

\maketitle
\lhead{Preprint}

\ificlrfinal
\begingroup
\renewcommand{\thefootnote}{\fnsymbol{footnote}}
\footnotetext[1]{Equal contribution.}
\footnotetext[2]{Corresponding author.}
\endgroup
\fi

\input{abstract}

\input{introduction}

\input{related_work}

\input{method}
\input{experiment}
\input{conclusion}
\input{ai_statement}
\input{ethic}
\input{reproduc}

\bibliography{iclr2027_conference}
\bibliographystyle{iclr2027_conference}

\clearpage
\input{appendix}

\end{document}

%% file: math_commands.tex
\usepackage{amsmath,amsfonts,bm}

\def\eqref#1{equation~\ref{#1}}

\def\1{\bm{1}}

\DeclareMathAlphabet{\mathsfit}{\encodingdefault}{\sfdefault}{m}{sl}
\SetMathAlphabet{\mathsfit}{bold}{\encodingdefault}{\sfdefault}{bx}{n}



%% file: abstract.tex
\begin{abstract}
Test-time scaling (TTS) improves the reasoning capabilities of large language models by allocating additional inference computation.
Existing approaches to improving TTS efficiency largely optimize accuracy against one resource dimension at a time, advancing either the \emph{accuracy--cost} or \emph{accuracy--latency} Pareto frontier.
Yet user requirements are multidimensional: users may specify accuracy, latency, and inference-cost requirements jointly, and different requirements can favor different controllers.
We formulate \emph{Personalized Test-Time Scaling} as discovering executable controllers that maximize the joint satisfaction rate of user-specific requirements.
To reduce the overhead of repeated policy discovery for new user profiles, we propose \textbf{PersonTTS}, an amortized agentic policy-discovery framework that reuses prior search experience through requirement-matched controller initialization and source-distilled procedural guidance, while retaining target-profile evaluation for every candidate.
Experiments on AIME and HMMT show that PersonTTS substantially outperforms strong TTS baselines in joint requirement satisfaction on unseen user profiles and held-out problems.
Under the same candidate-evaluation budget, cross-user experience reuse further improves policy quality while substantially reducing discovery-agent time and cost\footnote{Our code and data have been released on \url{https://github.com/WangXinglin/PersonTTS}.}.
\end{abstract}

%% file: introduction.tex
\section{Introduction}
\label{sec:introduction}

\begin{figure*}[t]
    \centering
    \includegraphics[width=\textwidth]{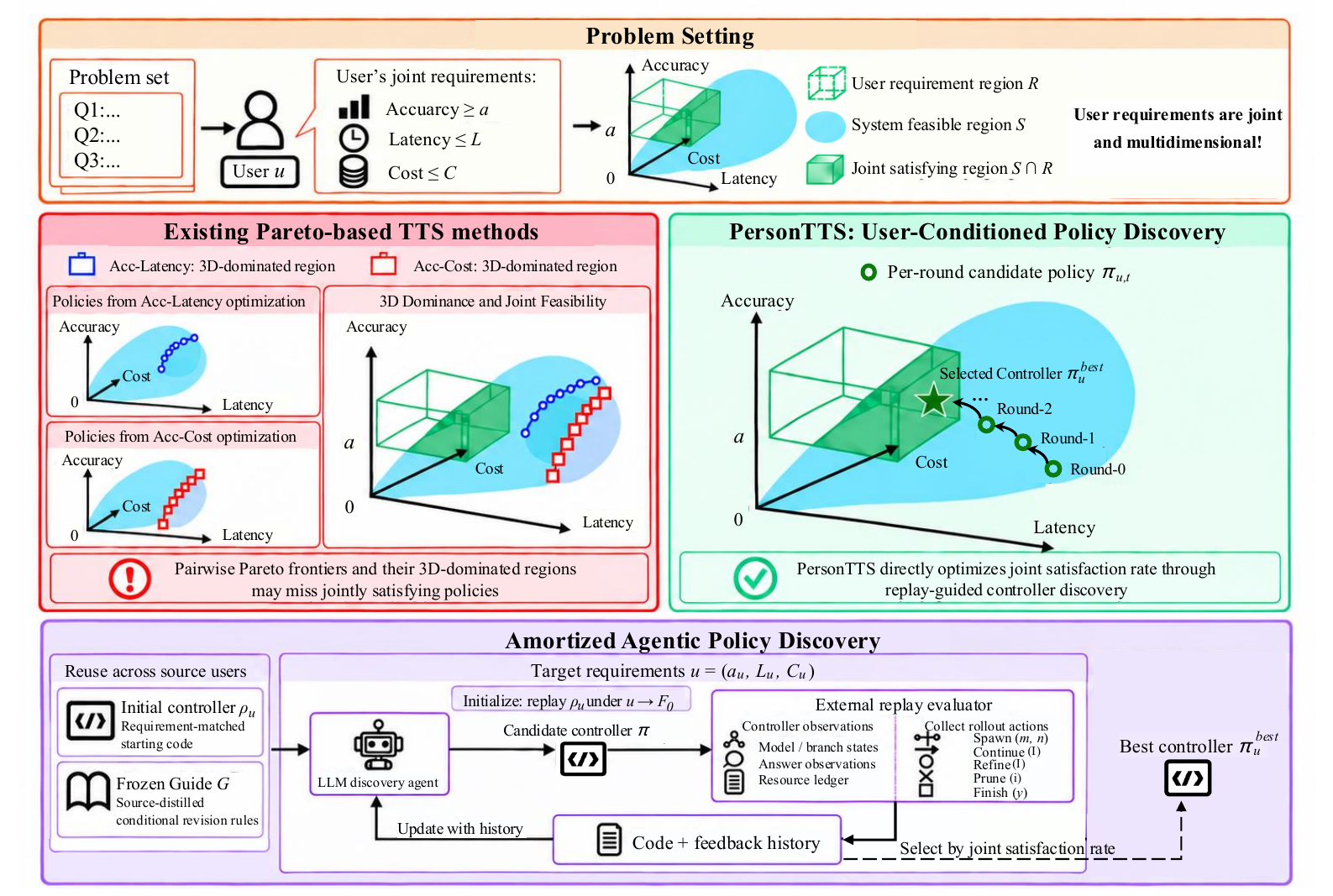}
    \caption{
    \textbf{Comparison of Existing Test-Time Scaling Paradigms and Personalized Test-Time Scaling.}
    Existing methods primarily optimize accuracy against latency or inference cost separately, whereas Personalized TTS targets their joint satisfaction under user-specific requirements.
    PersonTTS amortizes user-conditioned controller discovery by reusing requirement-matched controllers and procedural guidance from prior searches.
    }
    \label{fig:personalized-discovery}
    \vspace{-2pt}
\end{figure*}

Test-time scaling (TTS) has emerged as a powerful paradigm for improving the reasoning capabilities of large language models~\citep{o1, Kimi-k1.5,DeepSeek-R1}.
By allocating additional computation at inference time, TTS enables extended exploration of the solution space, allowing models to explore diverse reasoning paths, evaluate candidate answers, and refine solutions for complex tasks such as mathematical reasoning~\citep{SC,Self-Refine,lightman2024let}.
However, scaling up this exploration incurs substantial inference overhead~\citep{brown2024large,snell2025scaling}.
To mitigate these overheads, existing approaches largely focus on advancing the \emph{accuracy--cost} or \emph{accuracy--latency} Pareto frontier by reducing redundant computation~\citep{wu2025inference,wang2025faster,wang2026every, wang2026share,zheng2026autotts}, for example, using adaptive sampling, early stopping, or selective branch pruning~\citep{ASC,ESC,wang2025make,zheng2026parallelprobe}.

While yielding promising efficiency gains, these approaches typically optimize accuracy against one resource dimension at a time, which does not fully capture a user's joint requirements.
In practice, users may require a target accuracy subject to both latency and inference-cost limits~\citep{huang2025latency,wang2026time}, so different requirements can favor different controllers even on the same problem set.
Consequently, optimizing either Pareto frontier can fail to identify controllers that satisfy accuracy, latency, and inference-cost requirements jointly (Figure~\ref{fig:personalized-discovery}).
We therefore formulate \emph{Personalized Test-Time Scaling} as discovering executable controllers that maximize the joint satisfaction rate of user-specific accuracy, latency, and inference-cost requirements, shifting the focus from Pareto to preference.

Crucially, personalization affects not only how a controller is evaluated, but also how it should execute.
An executable TTS controller coordinates model selection, reasoning width and depth, refinement, pruning, and stopping~\citep{zheng2026autotts}, and these decisions can depend on the target requirements.
Candidate controllers must therefore be evaluated under the target profile rather than assumed to transfer across users.
However, running policy discovery independently for every new profile can be costly, e.g., \$39.9 and 160 minutes for a complete policy-discovery run~\citep{zheng2026autotts}.

To address this challenge, we propose \textbf{PersonTTS}, an amortized agentic policy-discovery framework that reuses prior search experience across users.
Specifically, PersonTTS retrieves a controller from a source profile with similar requirements and re-evaluates it under the target profile to initialize discovery, while a frozen Guide distilled from source discovery histories informs subsequent proposals (see Figure~\ref{fig:personalized-discovery}).
An LLM discovery agent then proposes executable controllers using the target requirements, replay feedback, and accumulated search history.
Each candidate is evaluated under the target profile, and the controller with the highest joint satisfaction rate is retained.
By combining controller reuse and procedural guidance with target-specific evaluation, PersonTTS amortizes design effort across users and enables more efficient personalized policy discovery.

We evaluate PersonTTS on AIME and HMMT~\citep{dekoninck2026matharena} using six Qwen3 models~\citep{qwen3technicalreport}, with independently sampled source and target requirement profiles and separate held-out problem sets.
PersonTTS substantially outperforms strong TTS baselines in joint requirement satisfaction on both unseen user profiles and held-out problems.
Under the same candidate-evaluation budget, cross-user experience reuse further improves policy quality over independent target-side discovery while substantially reducing discovery-agent time and monetary cost.
Ablation studies further characterize the distinct roles of requirement-matched initialization and procedural guidance in personalized policy discovery.

Our contributions are summarized as follows:
\begin{itemize}
    \item We formulate \textbf{Personalized Test-Time Scaling} as executable controller discovery under joint user-specific accuracy, latency, and inference-cost requirements, with the objective of maximizing their joint satisfaction.

    \item We propose \textbf{PersonTTS}, an amortized agentic policy-discovery framework that reuses requirement-matched controllers and procedural discovery experience across users while preserving target-specific evaluation.

    \item We empirically validate PersonTTS on AIME and HMMT, demonstrating substantial gains over strong TTS baselines on unseen user profiles and held-out problems, while cross-user experience reuse further improves policy quality and reduces discovery overhead under the same candidate-evaluation budget.
\end{itemize}

%% file: related_work.tex
\section{Related Work}
\label{sec:related-work}

\paragraph{Efficient Test-Time Scaling.}
Test-time scaling improves reasoning by allocating additional inference computation through parallel sampling and aggregation~\citep{SC,brown2024large,wu2025inference,snell2025scaling}, iterative feedback and refinement~\citep{Self-Refine}, or structured search over intermediate reasoning states~\citep{ToT,besta2024got}.
A growing body of work seeks to make this additional computation more efficient by reducing unnecessary exploration.
Adaptive sampling and early stopping terminate generation once sufficient evidence has been accumulated~\citep{ASC,ESC}, while methods conditioned on problem difficulty or reasoning quality allocate computation adaptively across problems or reasoning paths~\citep{wan2025rasc,wang2025make,snell2025scaling,wang2026every}.
Other approaches improve the organization of inference through branch pruning, adaptive control of search width and depth, parallel execution, or cross-branch information sharing~\citep{wang2025faster,huang2025latency,wang2026rse,wang2026share,zheng2026parallelprobe,zheng2026autotts}.
Despite their different mechanisms, these methods can largely be viewed as advancing the \emph{accuracy--cost} or \emph{accuracy--latency} Pareto frontier.
In contrast, Personalized TTS considers user-specific accuracy, latency, and inference-cost requirements jointly, and seeks executable controllers that maximize their joint satisfaction rather than optimizing a single accuracy--resource Pareto frontier.

\paragraph{Agentic Discovery.}
Large language models are increasingly used as discovery agents that iteratively propose, evaluate, and refine executable artifacts, ranging from algorithms and programs to agentic workflows~\citep{funsearch,eoh,novikov2025alphaevolve,hu2025automated,zhang2025aflow}.
Recent approaches further leverage textual feedback or execution histories to guide subsequent proposals, enabling more targeted optimization of prompts, modules, and system code~\citep{agrawal2026gepa,lee2026meta}.
AutoTTS brings this paradigm to test-time scaling by formulating TTS strategy design as executable controller synthesis in an offline replay environment, targeting the accuracy--cost Pareto frontier~\citep{zheng2026autotts}.
PersonTTS extends this paradigm to user-specific joint accuracy, latency, and inference-cost requirements, while amortizing the otherwise repeated controller discovery across profiles through reusable prior experience.

\paragraph{Experience Reuse in LLMs.}
Prior work has explored reusing past experience to improve subsequent behavior, including reflections and distilled insights~\citep{shinn2023reflexion,zhao2024expel}, reasoning and search experience distilled from successful and failed trajectories~\citep{ouyang2026reasoningbank,wang2026rse}, reusable executable skills~\citep{wang2023voyager} and replay world simulators~\citep{zheng2026dreamrsirecursiveselfimprovementevolving}.
At the design level, SWIFT distills reusable design knowledge from previous workflow searches, while FlowBank precomputes and selects workflows for new queries~\citep{du2026swift,yuan2026flowbank}. 
PersonTTS instead focuses on reuse across user requirement profiles, using source controllers to initialize target discovery and distilled procedural experience to guide subsequent proposals, while retaining target-profile evaluation for controller selection.

%% file: method.tex
\section{Methodology}
\label{sec:method}

We formulate Personalized Test-Time Scaling as a user-conditioned
controller-discovery problem:
\begin{quote}
\emph{Given a user's accuracy, latency, and inference-cost requirements,
how can we efficiently automate personalized TTS policy design to maximize
the probability of jointly satisfying these requirements?}
\end{quote}
To address this problem, we propose PersonTTS, which combines
feedback-driven program search with cross-user experience reuse.
For each target requirement, retrieval provides an initial policy, a frozen
Guide informs candidate revisions, and replay-based selection retains the
evaluated policy with the highest joint satisfaction rate.

\subsection{Problem Formulation}
\label{sec:problem_formulation}

We represent a user's requirements by a profile
$u=(a_u,L_u,C_u)$, which specifies an accuracy floor $a_u\in[0,1]$,
a ceiling $L_u$ on maximum per-question replay latency, and a ceiling
$C_u$ on mean per-question inference cost.
In an environment $e$ with models $\mathcal M_e$, a policy $\pi$ is executable code that maps $u$ and the current observation history $h_k$ to an action. Public observations include model and branch availability, branch progress, intermediate answers and answer histories, and accumulated latency and cost. As correctness is assessed externally, the controller receives neither reference answers nor correctness labels.

\paragraph{Action space.}
A policy controls \emph{model routing}, \emph{reasoning width},
\emph{reasoning depth}, and \emph{self-refinement}. Calling a reasoning path
a branch, we write $\pi(u,h_k)\in\mathcal A_e$, where
\begin{equation}
\label{eq:action_space}
\mathcal A_e=\{\operatorname{Spawn}(m,n),\operatorname{Continue}(I),
\operatorname{Refine}(I),\operatorname{Prune}(i),\operatorname{Finish}(y)\}.
\end{equation}
Here, $\operatorname{Spawn}(m,n)$ opens $n$ branches with model
$m\in\mathcal M_e$, $\operatorname{Continue}(I)$ advances the selected
branch set $I$, and $\operatorname{Refine}(I)$ starts self-refinement for
eligible branches whose current stage has ended. $\operatorname{Prune}(i)$
excludes branch $i$ from further reasoning and answer selection.
$\operatorname{Finish}(y)$ ends the question with an observed answer $y$,
or deterministic plurality voting if $y$ is omitted. The environment
determines action admissibility, while the full interface is given in
Appendix~\ref{app:public_api}. Program search can therefore revise both
resource allocation and the rules that adapt it to intermediate observations.

\paragraph{Joint satisfaction objective.}
Fix a labeled question set $\mathcal D=\{(q_i,y_i^\star)\}_{i=1}^{N}$,
$N\ge1$, a cache $\mathcal R_e$, and a nonempty finite replay-seed panel
$\Omega$. Each seed specifies a complete evaluation over $\mathcal D$,
called a \emph{seed batch}, and varies cached-branch consumption order.
For complete finite replay records
$\tau_u^\pi(s)=\operatorname{Replay}_e(\pi,u;\mathcal R_e,\mathcal D,s)$, define
\begin{equation}
\label{eq:replay_metrics}
A_u^\pi(s)=\frac1N\sum_i\chi_i(\tau_u^\pi(s)),\quad
L_u^\pi(s)=\max_{q\in\mathcal D}\ell_s(q;\pi,u),\quad
C_u^\pi(s)=\frac1N\sum_i c_i(\tau_u^\pi(s)),
\end{equation}
where $\chi_i$ checks the returned answer against $y_i^\star$, $c_i$ is
the inference cost for question $q_i$, and $\ell_s(q;\pi,u)$ is its replay
latency, with $L_u^\pi(s)=\Lambda_e(\tau_u^\pi(s))$.
We therefore define the \emph{joint satisfaction rate} (JSR) as the
fraction of seed batches satisfying all three requirements simultaneously:
\begin{equation}
\label{eq:jsr}
\widehat S_u(\pi;\mathcal D,\Omega)=\frac1{|\Omega|}\sum_{s\in\Omega}
\mathbf1\!\left[A_u^\pi(s)\ge a_u,\ L_u^\pi(s)\le L_u,\ C_u^\pi(s)\le C_u\right].
\end{equation}
JSR measures the fraction of seed batches meeting all three requirements simultaneously.
As changing $u$ affects both the acceptance thresholds and potentially the policy's actions, candidates must be evaluated under the target profile.

\subsection{User-Conditioned Agentic Policy Discovery}
\label{sec:persontts}

The search space includes branching, stopping, and refinement logic as well
as model and sampling choices.
To efficiently explore this space for policies satisfying a user's requirements, an LLM agent revises executable controller logic while retaining AutoTTS's offline replay foundation and within-search reuse of code and diagnostics~\citep{zheng2026autotts}.
Cached trajectories, intermediate answers, and resource records let us evaluate candidates without additional rollouts.
The controller acts on observations revealed by replay within each query, while the agent revises its code using the resulting feedback between evaluations.

A scalar JSR ranks candidates but cannot distinguish an accuracy shortfall
from a resource violation. Each evaluation therefore returns
$F_t=(X_t,\zeta_t,\mathcal T_t)$: the JSR $X_t$, constraint pass rates and
margins in $\zeta_t$, and sanitized executed traces $\mathcal T_t$ linking
these outcomes to controller decisions. Appendix~\ref{app:diagnostic_limits}
characterizes what exact constraint pass rates and JSR reveal about
single-constraint failures.
Starting from $\rho_u$, we obtain initial feedback $F_0$ and then generate
$B$ new candidates:
\begin{equation}
\label{eq:candidate_generation}
\pi_{u,t}\sim p_\theta(\cdot\mid e,u,\rho_u,\mathcal H_{u,<t},
\widehat\eta_{u,t},G),\qquad t=1,\ldots,B.
\end{equation}
where $\mathcal H_{u,<t}$ contains accessible prior code and feedback,
$\widehat\eta_{u,t}$ contains trace-based latency and token-cost estimates,
and $G$ is the optional Guide (Section~\ref{sec:experience_reuse}).
The initial code $\rho_u$ is a retrieved source policy when warm-start is enabled and a generic template otherwise.
It stays fixed across rounds, while the history accumulates all scored candidates, including unsuccessful revisions.
Adaptation changes program code, not the agent parameters $\theta$.

The agent inspects evidence and submits one candidate per round for
external static validation and replay.
However, it does not execute or score candidates itself.
The full procedure is given in Algorithm~\ref{alg:persontts} in
Appendix~\ref{app:discovery_algorithm}, where the discovery set, seed panel,
and replay realizations are fixed.
Only a strict JSR improvement replaces the incumbent, so ties retain the earlier policy. Initialization
is evaluated separately from the $B$ new scored candidates, while agent time
and expenditure are measured separately.

\subsection{Cross-User Experience Reuse}
\label{sec:experience_reuse}

Offline replay removes repeated rollouts, but each new user still requires program design and diagnosis.
To reduce this repetition, we retain source search histories in a \emph{Policy Experience Bank}, including user profiles, candidate code, evaluation feedback, execution traces, and selected policies.
These records support two forms of reuse: retrieval provides a starting policy, while a Guide distilled from candidate comparisons informs subsequent revisions under the new user's requirements.

\paragraph{Requirement-similarity policy initialization.}
Let $\mathcal I_e$ index source entries with compatible models and runtime
configurations, with $\pi_i^{\mathrm{src}}$ selected for source profile $u_i$.
To compare resource ratios alongside accuracy differences, we log-transform
resource ceilings and standardize using compatible source profiles:
\begin{equation}
\label{eq:requirement_transform}
\mathbf x(u)=(a_u,\ln L_u,\ln C_u),\qquad
z_{e,j}(u)=\frac{x_j(u)-\mu_{e,j}}{\sigma_{e,j}},\quad j=1,2,3,
\end{equation}
where $\mu_{e,j}$ and $\sigma_{e,j}$ are source-coordinate means and standard deviations.
Retrieval assumes a nonempty compatible bank, positive resource ceilings, and positive coordinate standard deviations.
Equal-weight Euclidean distance then selects the initial program:
\begin{equation}
\label{eq:nearest_neighbor}
i^\star(u)\in\operatorname*{arg\,min}_{i\in\mathcal I_e}
\|\mathbf z_e(u)-\mathbf z_e(u_i)\|_2,\qquad
\rho_u=\pi_{i^\star(u)}^{\mathrm{src}}.
\end{equation}
We replay the retrieved code under the target profile to obtain $F_0$.
This establishes its target joint satisfaction rate (JSR) and supplies constraint diagnostics for
the first revision, rather than relying on its source score.

\paragraph{Procedural skill distillation and guided discovery.}
The retrieved policy and its initial feedback provide a starting point,
but deciding how to revise it requires evidence about alternative designs.
We therefore compare source candidates under their associated user
requirements, linking code changes to evaluation feedback and execution
traces.
An agent distills these comparisons into a shared Guide whose rules
specify when a revision is appropriate, which changes to prioritize, and
how to evaluate the resulting policy.
Supporting and unsuccessful cases help identify the conditions under which
each rule applies.

When enabled, the frozen Guide enters every proposal through
Equation~\ref{eq:candidate_generation}, while target feedback updates the
agent's search evidence rather than the Guide itself.
This allows source-side preparation to be shared across users while keeping
candidate generation and final selection tied to each target profile.
Construction details are provided in Appendices~\ref{app:guide_generation}
and~\ref{App:training_trace_methodology}.

\subsection{Separating Requirement Changes from Execution Changes}
\label{sec:reuse_transfer}

Reusing a source policy changes both how its outcomes are judged and how
it may execute. To separate these effects, fix source and target profiles
$u,v$, a common environment, cache, question set, and finite nonempty panel
$\Omega$. For a policy $\pi$ with complete finite outcomes under both profiles,
we denote its joint satisfaction rate (JSR) by
$f_{z,\Omega}(\pi)=\widehat S_z(\pi;\mathcal D,\Omega)$.
Orient metrics so that larger is better:
$\mathbf w_z^\pi(s)=(A_z^\pi(s),-L_z^\pi(s),-C_z^\pi(s))$ and
$\mathbf b(z)=(a_z,-L_z,-C_z)$. For fixed positive unit scales $d_j$, define
\begin{equation}
\label{eq:reuse_rethreshold}
g^\pi(s)=\min_{1\le j\le3}\frac{w_{u,j}^\pi(s)-b_j(v)}{d_j},\qquad
\bar f_\pi=\frac1{|\Omega|}\sum_{s\in\Omega}\mathbf1\{g^\pi(s)\ge0\}.
\end{equation}
The scales normalize units. Applying target thresholds to fixed source
outcomes gives $\bar f_\pi$, so
\begin{equation}
\label{eq:reuse_transfer_decomposition}
f_{v,\Omega}(\pi)-f_{u,\Omega}(\pi)
=[\bar f_\pi-f_{u,\Omega}(\pi)]+[f_{v,\Omega}(\pi)-\bar f_\pi].
\end{equation}
The first term captures threshold changes on fixed records, while the second
captures the effect of requirement-conditioned execution.
This decomposition explains why source-profile performance can inform target
discovery without generally determining target-profile performance.
Appendix~\ref{app:reuse_analysis} extends this analysis to comparisons
between policies, including ranking reversals under changed requirements
and the insufficiency of marginal metrics for determining JSR.

\paragraph{Proposition: target satisfaction bounds.}
Suppose finite bounds $r^\pi(s)$ satisfy
$r^\pi(s)\ge\max_j|w_{v,j}^\pi(s)-w_{u,j}^\pi(s)|/d_j$ for every seed.
Then $F_\pi^-\le f_{v,\Omega}(\pi)\le F_\pi^+$, where
\begin{equation}
\label{eq:reuse_target_envelope}
F_\pi^-=\frac1{|\Omega|}\sum_{s\in\Omega}\mathbf1\{g^\pi(s)\ge r^\pi(s)\},
\qquad
F_\pi^+=\frac1{|\Omega|}\sum_{s\in\Omega}\mathbf1\{g^\pi(s)\ge-r^\pi(s)\}.
\end{equation}
The proof is in Appendix~\ref{app:reuse_transfer_proof}. When $r^\pi(s)=0$
for every seed, rethresholding determines target JSR exactly. More generally,
the bound depends on execution deviation, not profile distance alone.
Observed target deviations yield a post-replay diagnostic, while prediction
before replay requires independently justified deviation bounds.
Accordingly, PersonTTS uses source similarity and design experience to guide
policy discovery, while retaining target execution for candidate evaluation
and selection.

%% file: experiment.tex
\section{Experiments}


\subsection{Experimental Setup}
\label{sec:experimental_setup}

\paragraph{Benchmarks and profiles.}
We evaluate PersonTTS on AIME and HMMT~\citep{dekoninck2026matharena}
using six Qwen3 models with 0.6B, 1.7B, 4B, 8B, 14B, and 32B
parameters~\citep{qwen3technicalreport}.
AIME24--25 provides 60 discovery problems, with AIME26 (30 problems)
held out, while HMMT24 provides 30 discovery problems, with HMMT25
(30 problems) held out.
For each problem--model pair, the replay pool contains 128
pre-sampled, checkpointed reasoning trajectories with intermediate
answers, token counts for thinking, probing, and feedback, and latency
records, as described in Appendix~\ref{app:pool_build}.
Each benchmark uses 100 source profiles for the main comparison and
20 independently sampled target profiles for cross-user experiments.
We sample profiles by difficulty-stratified maximin coverage calibrated
on discovery data, using the procedure in
Appendix~\ref{app:profile_sampling}; the source and target sets contain
no duplicate threshold triples. The experience bank and Guide are
constructed only from source-profile discovery histories, excluding
target-profile records and held-out problem outcomes. Additionally, we provide the full evaluation protocol and discovery settings in Appendices~\ref{app:cost_model}
and~\ref{app:discovery_settings}, respectively.

\paragraph{Baselines.}
We compare against three baseline families.
\textbf{AutoTTS}~\citep{zheng2026autotts} optimizes a scalar
accuracy--cost trade-off, for which we report $\beta=0.5$ and
$\beta=1.0$ separately.
\textbf{Self-Consistency} includes ASC~\citep{ASC} and ESC~\citep{ESC},
also reported separately. Both use fixed model, width, depth, and
refinement configurations while allowing runtime stopping decisions.
\textbf{Parallel-Probe}~\citep{zheng2026parallelprobe} combines
intermediate-answer probing, branch pruning, and self-refinement.
Together, these baselines compare personalized policy discovery with
scalar-objective search and predefined inference strategies under the
same joint-satisfaction metric.

\subsection{Main Results}
\label{sec:main_results}

To evaluate whether directly optimizing for user-specific joint requirements provides an advantage over existing TTS strategies, we compare PersonTTS and its no-reuse variant against strong external baselines under the same joint-satisfaction metric.
Table~\ref{tab:main_results} indicates that personalized controller discovery substantially outperforms these baselines across both benchmarks and retains this advantage on held-out problems, while the no-reuse variant already preserves most of this advantage.
This suggests that the primary gain comes from optimizing executable
controllers for the joint user requirements themselves, rather than from cross-user experience reuse alone. Building on this stronger personalized objective, cross-user reuse further improves target-profile policies under the same candidate-evaluation budget,
with the improvement persisting on held-out problems, indicating that source experience helps steer discovery toward controller designs that transfer beyond the problems used during search.


\begin{table}[t]
\centering
\caption{JSR (\%; higher is better) across problem and profile splits.
Source and target profiles follow the Setup, and problem splits are labeled separately.
PersonTTS variants use the policy with the highest JSR on discovery problems across
initialization and all five rounds.
PersonTTS uses both reuse mechanisms. ``w/o reuse'' removes both the Guide and
warm-start, while the other ``w/o'' variants remove the named mechanism.
Dashes denote unreported results. Bold marks the
highest reported value in each column.}
\label{tab:main_results}
\small
\setlength{\tabcolsep}{2pt}
\renewcommand{\arraystretch}{1.15}
\begin{tabular*}{\linewidth}{@{\extracolsep{\fill}}l*{8}{c}@{}}
\toprule
\multirow[c]{3}{*}[-0.5ex]{\textbf{Method}} & \multicolumn{2}{c}{\textbf{AIME24-25}}
& \multicolumn{2}{c}{\textbf{AIME26}}
& \multicolumn{2}{c}{\textbf{HMMT24}}
& \multicolumn{2}{c}{\textbf{HMMT25}} \\
& \multicolumn{2}{c}{(Discovery)} & \multicolumn{2}{c}{(Held-out)}
& \multicolumn{2}{c}{(Discovery)} & \multicolumn{2}{c}{(Held-out)} \\
\cmidrule(lr){2-3}\cmidrule(lr){4-5}\cmidrule(lr){6-7}\cmidrule(lr){8-9}
& \multicolumn{1}{c}{Source} & \multicolumn{1}{c}{Target}
& \multicolumn{1}{c}{Source} & \multicolumn{1}{c}{Target}
& \multicolumn{1}{c}{Source} & \multicolumn{1}{c}{Target}
& \multicolumn{1}{c}{Source} & \multicolumn{1}{c}{Target} \\
\midrule
ASC & 13.53 & 0.14 & 7.03 & 0.00 & 11.91 & 4.89 & 12.03 & 4.98 \\
ESC & 13.28 & 0.14 & 6.22 & 0.00 & 11.22 & 4.89 & 11.83 & 4.98 \\
ParallelProbeSR & 23.77 & 10.19 & 23.89 & 9.35 & 23.81 & 19.30 & 18.07 & 11.51 \\
AutoTTS ($\beta=0.5$) & 12.11 & 16.41 & 10.52 & 16.03 & 7.04 & 2.96 & 7.25 & 3.23 \\
AutoTTS ($\beta=1.0$) & 31.15 & 35.58 & 30.11 & 35.47 & 0.00 & 0.01 & 0.00 & 0.15 \\
\midrule
PersonTTS (w/o reuse) & \textbf{88.35} & 85.57 & \textbf{78.60} & 79.22 & \textbf{91.41} & 87.04 & \textbf{85.71} & 69.16 \\
PersonTTS (w/o warm-start) & -- & 96.45 & -- & 83.50 & -- & 88.27 & -- & \textbf{79.09} \\
PersonTTS (w/o Guide) & -- & 95.13 & -- & 79.60 & -- & \textbf{91.19} & -- & 74.04 \\
PersonTTS & -- & \textbf{96.57} & -- & \textbf{83.85} & -- & 90.86 & -- & 77.28 \\
\bottomrule
\end{tabular*}
\end{table}

\subsection{Ablation Study}
\label{sec:ablation}

To disentangle how the two forms of cross-user experience reuse contribute to policy discovery, we separately remove requirement-matched initialization and the Guide under the same target profiles and candidate-evaluation budget. Figure~\ref{fig:round_jsr} and Table~\ref{tab:main_results} reveal distinct roles for the two mechanisms: warm-start mainly improves the initial search point, whereas the Guide informs subsequent revisions and provides more
consistent held-out gains across benchmarks. Their gains are not uniformly additive, suggesting that retrieved controllers act as target-dependent initialization priors, while procedural guidance is
less dependent on a particular source controller.

\begin{figure}[t]
    \centering
    \includegraphics[width=\linewidth]{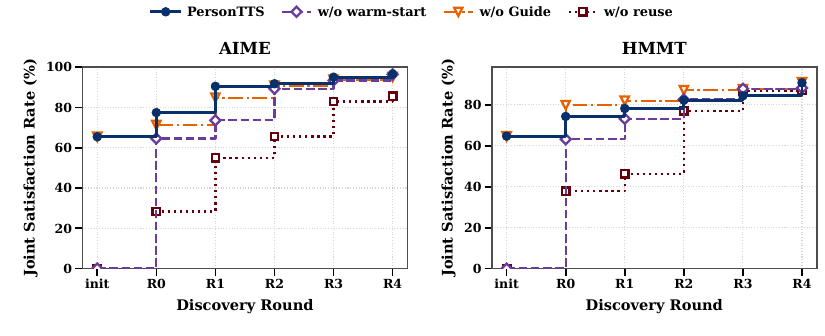}
    \caption{\textbf{Best-observed JSR on discovery sets across rounds.}
    Curves show the cumulative maximum JSR on the discovery sets after
    initialization and each of the five candidate rounds (R0--R4), averaged
    over target profiles on AIME24--25 and HMMT24. The four curves compare
    PersonTTS with the variants without warm-start, without the Guide,
    and without both mechanisms.}
    \label{fig:round_jsr}
\end{figure}

To determine whether the policy-quality gains from experience reuse are
accompanied by lower discovery overhead, we further compare agent-call time
and cost under the same five-round protocol.
Table~\ref{tab:agent_overhead} reveals that the Guide accounts for most
of the consistent efficiency improvement: relative to no reuse, Guide-only
discovery reduces agent-call time by approximately $46\%$ and cost by
approximately $36\%$ across the two benchmarks, whereas warm-start alone
mainly shortens elapsed time and can slightly increase cost.
Together with the policy-quality ablation, this pattern suggests that
distilled procedural experience reduces repeated diagnosis and revision
effort throughout discovery, while requirement-matched retrieval primarily
serves as an initialization prior whose value is more dependent on the
target setting.

\begin{table}[t]
    \small
    \centering
    \renewcommand{\arraystretch}{0.92}
    \setlength{\tabcolsep}{2.2pt}

    \newcommand{\timecost}[2]{%
        \makebox[2.6em][r]{#1}$\,|\,$\makebox[2.6em][l]{#2}%
    }

    \caption{
    Agent-call elapsed time and cost during target-profile discovery
    on discovery problems.
    Each cell reports time (minutes) $\,|\,$ cost (USD), with lower
    values better for both metrics.
    Values are averaged over 20 target profiles per benchmark and round.
    Total gives the per-profile sum across the five rounds.
    Variant names follow Table~\ref{tab:main_results}.
    }
    \label{tab:agent_overhead}

    \resizebox{\linewidth}{!}{
    \begin{tabular}{@{}l*{6}{c}@{}}
        \toprule
        \multirow[c]{2}{*}[-0.5ex]{\textbf{Method}}
        & \multicolumn{5}{c}{\textbf{Round}}
        & \multirow[c]{2}{*}[-0.5ex]{\textbf{Total}} \\
        \cmidrule(lr){2-6}
        & R0 & R1 & R2 & R3 & R4 & \\
        \midrule
        \multicolumn{7}{c}{\textbf{\textit{AIME}}} \\
        \midrule
        PersonTTS (w/o reuse)
            & \timecost{18.67}{1.58}
            & \timecost{17.37}{2.29}
            & \timecost{14.56}{2.20}
            & \timecost{12.33}{1.94}
            & \timecost{11.70}{2.29}
            & \timecost{74.63}{10.30} \\
        PersonTTS (w/o Guide)
            & \timecost{12.74}{1.61}
            & \timecost{13.72}{2.03}
            & \timecost{14.70}{2.36}
            & \timecost{12.85}{2.27}
            & \timecost{12.31}{2.37}
            & \timecost{66.33}{10.64} \\
        PersonTTS (w/o warm-start)
            & \timecost{13.60}{1.24}
            & \timecost{7.05}{1.29}
            & \timecost{7.56}{1.46}
            & \timecost{6.13}{1.28}
            & \timecost{5.35}{1.37}
            & \timecost{39.69}{6.64} \\
        PersonTTS
            & \timecost{8.49}{1.15}
            & \timecost{8.90}{1.51}
            & \timecost{8.49}{1.67}
            & \timecost{7.83}{1.57}
            & \timecost{6.70}{1.50}
            & \timecost{40.41}{7.40} \\
        \midrule
        \multicolumn{7}{c}{\textbf{\textit{HMMT}}} \\
        \midrule
        PersonTTS (w/o reuse)
            & \timecost{22.06}{2.05}
            & \timecost{18.63}{2.47}
            & \timecost{14.82}{2.40}
            & \timecost{15.80}{2.72}
            & \timecost{14.18}{2.70}
            & \timecost{85.49}{12.34} \\
        PersonTTS (w/o Guide)
            & \timecost{15.41}{2.25}
            & \timecost{16.14}{2.50}
            & \timecost{14.80}{2.46}
            & \timecost{17.12}{2.99}
            & \timecost{12.83}{2.56}
            & \timecost{76.30}{12.76} \\
        PersonTTS (w/o warm-start)
            & \timecost{14.05}{1.40}
            & \timecost{9.00}{1.54}
            & \timecost{8.29}{1.67}
            & \timecost{7.66}{1.63}
            & \timecost{7.29}{1.62}
            & \timecost{46.29}{7.87} \\
        PersonTTS
            & \timecost{11.68}{1.74}
            & \timecost{11.34}{1.80}
            & \timecost{10.40}{2.03}
            & \timecost{7.45}{1.61}
            & \timecost{7.63}{1.66}
            & \timecost{48.51}{8.84} \\
        \bottomrule
    \end{tabular}
    }
\end{table}
\subsection{Analysis}
\label{sec:analysis}

We analyze the proposed PersonTTS from the following perspectives:
(1) discovery-to-held-out generalization,
(2) experience bank scaling,
(3) cross-benchmark generalization (Appendix~\ref{sec:cross_benchmark}).
\subsubsection{Discovery-to-Held-out Generalization}
\label{sec:train_test_transfer}

To examine whether round-wise discovery JSR is informative of out-of-sample policy quality, we evaluate the policy produced at each discovery round on both the optimization and held-out problems. Figure~\ref{fig:rounds_train_test} shows that improvements in discovery JSR are generally accompanied by stronger held-out policies, with PersonTTS finishing above independent discovery on both benchmarks. The two trajectories are not perfectly aligned, however, and the persistent discovery--held-out gap indicates that discovery JSR is an informative search signal rather than a direct estimate of out-of-sample joint satisfaction.

\begin{figure}[t]
\centering
\includegraphics[width=\linewidth]{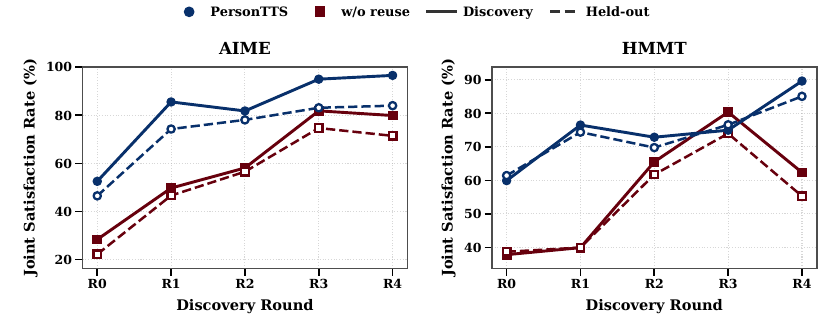}
\caption{\textbf{Discovery-to-held-out generalization across rounds.}
Solid and dashed lines report the JSR of the policy produced at each
discovery round on the discovery and held-out problems, respectively, for
PersonTTS and the variant without cross-user reuse on AIME and HMMT.
Held-out evaluations are used only for analysis and never exposed to the
discovery agent or used for policy selection.}
\label{fig:rounds_train_test}
\end{figure}

\subsubsection{Experience Bank Scaling}
\label{sec:bank_scaling}

To understand how the amount of reusable source experience affects target discovery, we vary the number of source-profile--policy pairs available for retrieval while keeping the Guide and target-side candidate-evaluation budget fixed. For each benchmark, five bank-sampling seeds construct nested banks of 20 and 60 source-profile--policy pairs from the full 100-pair bank, and each sampled bank is used in a separate discovery run for the same target profiles.

The results in Table~\ref{tab:bank_scaling} indicate that enlarging the bank consistently improves policy quality on the discovery problems, suggesting that broader coverage of requirement-specific controller designs increases the chance of retrieving a useful starting point. This trend does not extend monotonically to held-out problems, however: larger banks continue to help on AIME but reverse on HMMT, showing that broader retrieval coverage can facilitate target-side search without guaranteeing stronger cross-problem generalization. Because the Guide and candidate-evaluation budget remain fixed, this pattern isolates a limitation of retrieval-side scaling and motivates calibrated coverage or retrieval-confidence estimates rather than treating bank size as
a uniformly beneficial scaling axis.

\begin{table}[t]
\centering
\caption{Experience-bank scaling: JSR (\%; higher is better) of the final
discovery-selected policies. Scores are averaged over available bank-seed
runs within each target profile, then equally over 20 profiles.
Bold marks the highest reported value in each column. The 100-profile bank is the full source bank used in the main comparison.}
\label{tab:bank_scaling}
\small
\setlength{\tabcolsep}{5pt}
\renewcommand{\arraystretch}{1.13}
\begin{tabular*}{\linewidth}{@{\extracolsep{\fill}}lcccc@{}}
\toprule
\multirow[c]{2}{*}{\textbf{Bank Size}} & \textbf{AIME24--25} & \textbf{AIME26}
& \textbf{HMMT24} & \textbf{HMMT25} \\
& (Discovery) & (Held-out) & (Discovery) & (Held-out) \\
\midrule
20 & 92.24 & 80.30 & 86.22 & \textbf{86.09} \\
60 & 94.51 & 81.38 & 87.40 & 85.91 \\
100 & \textbf{96.57} & \textbf{83.85}
& \textbf{90.86} & 77.28 \\
\bottomrule
\end{tabular*}
\par\smallskip
\begin{minipage}{\linewidth}
\footnotesize
\end{minipage}
\end{table}

%% file: conclusion.tex
\section{Conclusions}

In this work, we formulate Personalized Test-Time Scaling, which seeks executable controllers that jointly satisfy user-specific accuracy, latency, and inference-cost requirements rather than optimizing a single accuracy--resource Pareto frontier.
We propose PersonTTS, an amortized agentic policy-discovery framework that combines target-conditioned controller search with cross-user reuse of requirement-matched source controllers and procedural guidance, while retaining target-profile evaluation for candidate selection.
Experiments on AIME and HMMT demonstrate substantial gains over strong TTS baselines on unseen user profiles and held-out problems, while cross-user experience reuse further improves policy quality and reduces discovery overhead under the same candidate-evaluation budget.
Overall, our results highlight the value of amortizing personalized TTS policy design across users.
Future work could combine calibrated coverage or uncertainty estimates with request-frequency-aware deployment decisions, reusing matched controllers when the experience bank is reliable and triggering background discovery for frequent or poorly covered profiles whose search cost can be amortized over future requests.

%% file: ai_statement.tex
\section*{AI Use Statement}
In this work, we used generative AI tools to polish the manuscript. We have not
used generative AI tools for other tasks with required disclosure, and the
remaining disclosure tasks are not applicable to this work. Additionally, we
used generative AI tools to improve the manuscript's readability. We have
reviewed the polished text. We take responsibility for the final content of
this work, including text, claims, or artifacts produced with the aid of
generative AI.

%% file: ethic.tex
\section*{Ethics Statement}
This study uses offline replay with simulated operational requirement profiles
and mathematical reasoning problems; it involves no human participants and does
not measure subjective satisfaction. The reported joint-satisfaction metric is
defined by accuracy and resource constraints in this replay setting. Deployment
would require validation on the intended workload and resource accounting.

%% file: reproduc.tex
\section*{Reproducibility Statement}
Section~\ref{sec:method} defines the controller interface, joint-satisfaction
objective, retrieval procedure, and candidate-selection loop. Section~\ref{sec:experimental_setup}
describes the problem and profile splits, replay evaluation protocol, and
discovery settings. The appendix documents replay-pool construction,
profile sampling, the discovery prompt and public API, Guide distillation,
round-wise measurements, and the proofs for the conditional transfer analysis.

%% file: appendix.tex
\appendix
\numberwithin{table}{section}

\section{User-Conditioned Policy Discovery Algorithm}
\label{app:discovery_algorithm}

\begin{algorithm}[H]
\caption{User-Conditioned Policy Discovery}
\label{alg:persontts}
\small
\begin{algorithmic}[1]
\Require Environment $e$, cache $\mathcal R_e$, requirements $u$, fixed discovery set $\mathcal D$
and seed panel $\Omega$; initial policy $\rho_u$ with a well-defined JSR;
optional Guide $G$; $B$ new scored candidates
\State $F_0\gets\Call{Evaluate}{\rho_u;e,\mathcal R_e,u,\mathcal D,\Omega}$
\State $(\pi_{\mathrm{best}},X_{\mathrm{best}})\gets(\rho_u,\Call{JSR}{F_0})$;
$\mathit{records}\gets[(\rho_u,F_0)]$
\For{$t=1,\ldots,B$}
    \State $\mathcal H_{u,<t}\gets$ accessible context from $\mathit{records}$
    \State $\widehat\eta_{u,t}\gets$ calibration estimates from available traces
    \State $\pi_{u,t}\gets\Call{AgentGenerate}{e,u,\rho_u,\mathcal H_{u,<t},\widehat\eta_{u,t},G}$
    \State \Call{StaticValidate}{$\pi_{u,t}$}
    \State $F_t\gets\Call{Evaluate}{\pi_{u,t};e,\mathcal R_e,u,\mathcal D,\Omega}$
    \State Append $(\pi_{u,t},F_t)$ to $\mathit{records}$
    \If{$\Call{JSR}{F_t}>X_{\mathrm{best}}$}
        \State $(\pi_{\mathrm{best}},X_{\mathrm{best}})\gets(\pi_{u,t},\Call{JSR}{F_t})$
    \EndIf
\EndFor
\State \Return $\pi_{\mathrm{best}}$
\end{algorithmic}
\end{algorithm}

\section{Implementation Details}
\label{app:implementation_details}

\subsection{Offline Replay Pool Construction}
\label{app:pool_build}

We construct a frozen replay pool from offline model rollouts. For each
problem--model pair in Section~\ref{sec:experimental_setup}, we collect
128 rollouts containing extracted answers, think/probe/feedback token
counts, and wall-clock latency records. Each rollout consists of an
initial reasoning stage followed by up to 5 self-refinement stages.
Within each stage, we place a checkpoint every 500 thinking tokens and
use an answer probe of approximately 30 tokens to extract the current
answer.

Rollout generation uses temperature $T=0.6$, top-$p=0.95$, and top-$k=20$.
Correctness is checked by answer extraction and symbolic comparison with
the reference answer. The replay cache excludes question text, raw
reasoning, and feedback text; correctness labels are not exposed to
controllers.

\subsection{Replay Evaluation and Cost Accounting}
\label{app:cost_model}

To compare policies under identical evaluation conditions, all methods
share profiles, replay pools, 1,000 evaluation seeds, and resource-accounting
rules within each benchmark split. The same seed panel is used across
discovery rounds. Seeds reorder branch consumption rather than problems;
replay is deterministic given the policy, profile, problem set, cache,
seed, and execution settings, so these seeds are not independent discovery
runs. All controller evaluations use CPU-only offline replay without
additional task-model calls.

Within each seed, accuracy and weighted compute are per-question means,
while latency is the maximum per-question replay latency defined in
Equation~\ref{eq:replay_metrics},
$\Lambda_e(\tau_u^\pi(s))=\max_{q\in\mathcal D}\ell_s(q;\pi,u)$.
This maximum is a replay statistic, not measured batch wall time.
Latency is computed from stored rollout timings: sequential phases add,
while each parallel phase contributes its cohort maximum.

We calculate weighted compute by summing think, probe, and feedback tokens
over consumed branches, with each token weighted by its model's parameter
count normalized to Qwen3-8B~\footnote{\url{https://www.alibabacloud.com/help/en/model-studio/model-pricing}}.
Profile cost budgets apply to the mean weighted compute per question.
Discovery-agent dollar expenditure is measured and reported separately
from replay inference cost.

We report joint satisfaction rate (JSR), as defined in Equation~\ref{eq:jsr}, averaged equally
over profiles within each benchmark; cross-benchmark summaries average
the two benchmark scores. Controller access restrictions are specified
in Appendix~\ref{app:action_space}.

\subsection{Requirement Profile Sampling}
\label{app:profile_sampling}

Calibration and profile generation are performed separately for AIME and
HMMT, using discovery problems only. The calibration script,
\texttt{calibrate\_profile\_grid.py}, evaluates the Cartesian product
in Table~\ref{tab:anchor_grid}, giving $6\times5\times7\times6=1{,}260$
anchor configurations per benchmark.

\begin{center}
\begin{minipage}{\linewidth}
\captionof{table}{Calibration grid. Depth is a checkpoint limit per stage;
the refinement limit counts additional stages after the initial stage.}
\label{tab:anchor_grid}
\small
\renewcommand{\arraystretch}{1.12}
\begin{tabularx}{\linewidth}{@{}lX@{}}
\toprule
Dimension & Values \\
\midrule
Model & Qwen3-0.6B, 1.7B, 4B, 8B, 14B, 32B \\
Rollout width & $\{1,4,8,16,64\}$ \\
Depth per stage & $\{1,2,4,8,16,32,66\}$ checkpoints \\
Self-refinement limit & $\{0,1,2,3,4,5\}$ \\
\bottomrule
\end{tabularx}
\end{minipage}
\end{center}

The calibration policy spawns the configured width as one parallel cohort,
uses no early stopping or pruning, and refines only branches that reach a
cached terminal stage within the depth limit. It returns a deterministic
plurality vote over branch answers. Each configuration is replayed under
100 seeds over all 60 AIME or 30 HMMT discovery problems. Within-seed
metrics are aggregated as in Equation~\ref{eq:replay_metrics}.
Let $(A_k^{\mathrm{anc}},L_k^{\mathrm{anc}},C_k^{\mathrm{anc}})$
denote the recorded calibration summary for anchor $k$.

The profile-generation script, \texttt{generate\_profiles\_stratified.py},
uses empirical quantile functions $Q_A,Q_L,Q_C$ from the anchor summaries.
Candidate thresholds are drawn independently in quantile space:
\begin{equation}
\begin{aligned}
z_A,z_L,z_C
&\stackrel{\mathrm{iid}}{\sim}\operatorname{Uniform}(0.03,0.97),\\
u
&=(Q_A(z_A),Q_L(z_L),Q_C(z_C))=(a_u,L_u,C_u).
\end{aligned}
\label{eq:impl_profile_quantiles}
\end{equation}
Its feasibility ratio is the fraction of anchor configurations whose
recorded calibration summaries satisfy all three thresholds:
\begin{equation}
\operatorname{FR}(u)=\frac{1}{1260}\sum_{k=1}^{1260}
\mathbf 1\!\left[
 A_k^{\mathrm{anc}}\ge a_u\;\land\;
 L_k^{\mathrm{anc}}\le L_u\;\land\;
 C_k^{\mathrm{anc}}\le C_u
\right].
\label{eq:impl_profile_feasibility}
\end{equation}
A lower ratio marks a harder requirement profile relative to this anchor
family. We use the ratio to stratify profile difficulty; it measures anchor
coverage rather than feasibility for every possible controller.

\begin{samepage}
Candidates are divided into five equal-frequency bins by feasibility
ratio. Within each bin, greedy maximin selection chooses the candidate
with the greatest minimum distance to the profiles already selected
in that bin, using the sampler's normalized constraint space. With
generation seed 20260901, the source set contains 100 profiles per
benchmark, with 20 selected from each bin. The 20-profile target sets
are sampled independently with distinct seeds using the same procedure;
no complete threshold triple is shared with the corresponding source set.
\par
\end{samepage}

\subsection{Discovery Settings}
\label{app:discovery_settings}

Policy discovery follows Algorithm~\ref{alg:persontts} with $B=5$
new-candidate evaluations per profile and a separate initialization
evaluation. All scored rounds are retained even after incumbent joint satisfaction rate (JSR)
reaches one. We use \texttt{claude-opus-4-6} as the discovery agent, with
a one-hour session limit and a 4,000-token thinking-budget setting.
The agent interface does not expose temperature, so the provider default
is used. Infrastructure retries have a separate allowance.
The discovery prompt is provided in Appendix~\ref{app:discovery_prompt}.
Source runs use PersonTTS (w/o reuse). On the same target
profiles, we compare PersonTTS (w/o reuse), PersonTTS
(w/o Guide), PersonTTS (w/o warm-start), and PersonTTS under the same
budget. For each PersonTTS variant, the policy with the highest JSR
on discovery problems is frozen for evaluation on held-out problems.
Outcomes on held-out problems never inform selection, while target-local
discovery feedback remains available during adaptation.

\section{Controller and Discovery Configuration}
\label{app:action_space}

Table~\ref{tab:access_roles} separates controller execution, candidate
generation, and external evaluation.
\begin{center}
\begin{minipage}{\linewidth}
\captionof{table}{Information and permissions by component.}
\label{tab:access_roles}
\small\renewcommand{\arraystretch}{1.12}
\begin{tabularx}{\linewidth}{@{}p{0.18\linewidth}X X@{}}
\toprule
Component & Inputs & Operations \\
\midrule
Controller & Requirements, model availability, remaining rollouts,
branch progress, answer histories, and resource ledger.
& Permitted replay actions only. \\
Discovery agent & Starting code, accessible code/feedback history,
bounded sanitized trace summaries, and optional Guide.
& Edits \texttt{candidate.py}; cannot execute or score it. \\
External evaluator & Submitted code, replay cache, and reference answers.
& Executes code, records costs, checks answers, and returns permitted feedback. \\
\bottomrule
\end{tabularx}
\end{minipage}
\end{center}

Static validation restricts controller code to the permitted API and
prohibits file, network, and reflection access. These restrictions do not
prevent the discovery agent from editing candidate code. Branch models
are fixed at spawning. Gold answers and held-out evaluation results are
not returned to the controller or discovery agent.

The agent interface does not expose temperature, so the provider default
is used. Infrastructure retries have a separate allowance. All conditions
retain five scored rounds, including after incumbent joint satisfaction rate (JSR) reaches one.

\section{Additional Analysis}
\label{app:Additional Analysis}

\subsection{Cross-Benchmark Generalization}
\label{sec:cross_benchmark}

To isolate how much controller structure can transfer across problem
distributions without target-specific search, we directly reuse
requirement-matched policies discovered on the other benchmark and evaluate them unchanged on the target benchmark. This setting removes the discovery agent and Guide, allowing the effect of
requirement-matched policy transfer to be examined independently.

Table~\ref{tab:cross_transfer} shows that controllers retrieved by requirement similarity remain useful even without target-benchmark discovery, indicating that the retrieved policies can retain useful controller structure under a shift in problem distribution.
Their performance nevertheless remains substantially below target-native PersonTTS, suggesting that requirement matching is best viewed as a useful initialization mechanism rather than a substitute for target-specific discovery.

\begin{table}[h]
\centering
\caption{Cross-benchmark evaluation: JSR (\%; higher is better), averaged
over 20 target profiles with 1,000 replay seeds each. Baseline mean/max
summarize the five external configurations in \Cref{tab:main_results}
plus AutoTTS-template, which is not reported there. PersonTTS reports the target-native result on the corresponding target profiles and held-out
problems in that table.}
\label{tab:cross_transfer}
\small
\setlength{\tabcolsep}{5pt}
\begin{tabular}{lcccc}
\toprule
\textbf{Target} & \begin{tabular}[c]{@{}c@{}}Baseline\\mean\end{tabular}
& \begin{tabular}[c]{@{}c@{}}Baseline\\max\end{tabular}
& \begin{tabular}[c]{@{}c@{}}Matched\\transfer\end{tabular} & PersonTTS \\
\midrule
HMMT25 ($\leftarrow$ AIME) & 4.14 & 11.51 & 25.10 & 77.28 \\
AIME26 ($\leftarrow$ HMMT) & 10.14 & 35.47 & 43.04 & 83.85 \\
\bottomrule
\end{tabular}
\end{table}

\section{What Source Comparisons Imply under New Requirements}
\label{app:reuse_analysis}
\label{app:ptts-analysis}
\label{app:ptts-rethreshold}

\subsection{Target Satisfaction Bounds}
\label{app:reuse_transfer_proof}

We prove the bound in Equation~\ref{eq:reuse_target_envelope}.
The source margin $g^\pi(s)$ uses the target thresholds, and
$r^\pi(s)$ bounds the normalized source--target execution deviation
on the same finite seed panel.

\begin{proof}
Define the minimum standardized target-execution margin
\begin{equation}
g_v^\pi(s)
=
\min_j
\frac{w_{v,j}^\pi(s)-b_j(v)}{d_j}.
\end{equation}
By assumption, each normalized target margin differs from its source
counterpart by at most $r^\pi(s)$. Their minima therefore satisfy
\begin{equation}
|g_v^\pi(s)-g^\pi(s)|
\le r^\pi(s).
\end{equation}
Hence $g^\pi(s)\ge r^\pi(s)$ implies $g_v^\pi(s)\ge0$, while
$g_v^\pi(s)\ge0$ implies $g^\pi(s)\ge-r^\pi(s)$. Therefore
\begin{equation}
\mathbf1\{g^\pi(s)\ge r^\pi(s)\}
\le
\mathbf1\{g_v^\pi(s)\ge0\}
\le
\mathbf1\{g^\pi(s)\ge-r^\pi(s)\}.
\end{equation}
Averaging over $s\in\Omega$ gives
$F_\pi^-\le f_{v,\Omega}(\pi)\le F_\pi^+$.
\end{proof}

When $r^\pi(s)=0$ for every seed, rethresholding determines the target
joint satisfaction rate (JSR) exactly. More generally, the bound depends on execution deviation
rather than profile distance alone.

\subsection{Comparing Programs on Fixed Source Records}

Fix an environment, trajectory cache, question set $\mathcal D$ with
$N\ge1$ questions, and a nonempty finite indexed replay panel $\Omega$,
with $m=|\Omega|$. Two complete programs $\pi,\pi'$ have complete,
finite outcomes $(A_u^p(s),L_u^p(s),C_u^p(s))$ under the same source
profile $u$, for $p\in\{\pi,\pi'\}$ and $s\in\Omega$, with accuracy
in $[0,1]$ and nonnegative resources.
Accuracy is the batch average, latency the maximum over questions, and
cost the mean over questions, as in Equation~\ref{eq:replay_metrics}.
Comparisons concern whole programs rather than isolated causal effects.
Seed independence is not assumed.

Let $\mathcal P=[0,1]\times[0,\infty)\times[0,\infty)$ be the space of
finite requirement thresholds. Each stored outcome defines an acceptance
region and a reclassified score:
\begin{equation}
\begin{aligned}
R_{u,s}^{p}
&=[0,A_u^p(s)]\times[L_u^p(s),\infty)\times[C_u^p(s),\infty),\\
\bar J_{u\to z}^{p}(s)&=\mathbf1\{z\in R_{u,s}^{p}\},\qquad
\bar S_{u\to z}(p)=\frac1m\sum_{s\in\Omega}\bar J_{u\to z}^{p}(s),\\
\bar\Delta_u(z)&=\bar S_{u\to z}(\pi')-\bar S_{u\to z}(\pi).
\end{aligned}
\label{eq:app-reuse-rethreshold}
\end{equation}
Bars denote threshold changes applied to \emph{fixed source outcomes},
not executions under $z$. Write
$\widehat S_z(p)=\widehat S_z(p;\mathcal D,\Omega)$ for the actual
execution joint satisfaction rate (JSR) from Equation~\ref{eq:jsr}. At the source,
$\bar S_{u\to u}(p)=\widehat S_u(p)$.
Reclassification at another profile need not equal $\widehat S_z(p)$.

\begin{pttsproposition}[Requirement-dependent comparisons of fixed records]
\label{prop:app-reuse-comparison}
Define the gained and lost success sets
$G_u(z)=\{s\in\Omega:z\in R_{u,s}^{\pi'}\setminus R_{u,s}^{\pi}\}$ and
$H_u(z)=\{s\in\Omega:z\in R_{u,s}^{\pi}\setminus R_{u,s}^{\pi'}\}$.
For every $z\in\mathcal P$,
\begin{equation}
\bar\Delta_u(z)=\frac{|G_u(z)|-|H_u(z)|}{m}.
\label{eq:app-reuse-gains}
\end{equation}
The comparison is constant within each region induced by the threshold
hyperplanes $a_z=A_u^p(s)$, $L_z=L_u^p(s)$, and $C_z=C_u^p(s)$ for
$p\in\{\pi,\pi'\}$ and $s\in\Omega$, with equality faces and domain
boundaries treated separately.
If $v$ relaxes $u$, namely $a_v\le a_u$, $L_v\ge L_u$, $C_v\ge C_u$,
let $n_p(u,v)$ count seeds unsuccessful at $u$ but successful at $v$
under reclassification. Then
\begin{equation}
\bar\Delta_u(v)=\bar\Delta_u(u)
             +\frac{n_{\pi'}(u,v)-n_{\pi}(u,v)}{m}.
\label{eq:app-reuse-relaxation}
\end{equation}
In particular, a strict source advantage $\bar\Delta_u(u)>0$ reverses
strictly if and only if
\begin{equation}
n_{\pi}(u,v)-n_{\pi'}(u,v)>m\bar\Delta_u(u).
\label{eq:app-reuse-reversal}
\end{equation}
Equality gives a tie. A smaller left-hand side preserves strict advantage.
\end{pttsproposition}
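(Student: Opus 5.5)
The plan is to work seed by seed with indicator arithmetic and then average over $\Omega$.

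\emph{Step 1: the gain/loss identity.} For each $s\in\Omega$, write $\bar J^{\pi'}_{u\to z}(s)-\bar J^{\pi}_{u\to z}(s)=\mathbf 1\{z\in R^{\pi'}_{u,s}\}-\mathbf 1\{z\in R^{\pi}_{u,s}\}$. This difference takes values in $\{-1,0,1\}$. It equals $+1$ exactly when $s\in G_u(z)$ and $-1$ exactly when $s\in H_u(z)$. Summing over $s$ and dividing by $m$ gives Equation~\ref{eq:app-reuse-gains}.

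\emph{Step 2: piecewise constancy.} Membership $z\in R^p_{u,s}$ is the conjunction of $a_z\le A^p_u(s)$, $L_z\ge L^p_u(s)$, and $C_z\ge C^p_u(s)$. Each conjunct depends only on which side of the corresponding coordinate hyperplane $z$ lies. Inside any open cell cut out by the finitely many hyperplanes (intersected with $\mathcal P$), every indicator is therefore constant. Hence $G_u(z)$, $H_u(z)$, and $\bar\Delta_u(z)$ are constant on each cell. On the faces themselves the closed inequalities hold with equality, so those faces form their own pieces.

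\emph{Step 3: the relaxation formula.} Suppose $v$ relaxes $u$. The key observation is monotonicity: if $u\in R^p_{u,s}$ then $a_v\le a_u\le A^p_u(s)$, and likewise for $L$ and $C$, so $v\in R^p_{u,s}$. A seed successful at $u$ thus stays successful at $v$, which gives
\begin{equation}
m\,\bar S_{u\to v}(p)=m\,\bar S_{u\to u}(p)+n_p(u,v).
\end{equation}
Subtracting this identity for $p=\pi$ from the one for $p=\pi'$ yields Equation~\ref{eq:app-reuse-relaxation}.

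\emph{Step 4: reversal criterion.} By Equation~\ref{eq:app-reuse-relaxation}, $\bar\Delta_u(v)<0$ holds iff $n_\pi-n_{\pi'}>m\bar\Delta_u(u)$. Equality in this inequality gives $\bar\Delta_u(v)=0$, a tie. The reverse strict inequality keeps $\bar\Delta_u(v)>0$, so the strict advantage is preserved.

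No step is deep. The only point needing care is the monotonicity in Step 3, that no seed is lost under relaxation, since the acceptance regions are down-closed in accuracy and up-closed in the resource coordinates. Without it, the count $n_p$ alone would not account for the change in $\bar S$.
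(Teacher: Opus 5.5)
Your proof is correct and follows the paper's argument essentially step for step. You use per-seed indicator differences for the gain/loss identity, fixed truth values of the coordinate comparisons within each cell, the fact that relaxation cannot turn a fixed-record success into a failure (giving $\bar S_{u\to v}(p)=\bar S_{u\to u}(p)+n_p(u,v)/m$), and then subtraction and rearrangement for the reversal, tie, and preservation conditions; the only cosmetic difference is that the paper writes $\widehat S_u(p)$ where you write $\bar S_{u\to u}(p)$, and these coincide by definition.
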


\begin{proof}
For each seed, the two joint indicators differ by $+1$ on $G_u(z)$,
$-1$ on $H_u(z)$, and zero otherwise. Summing gives
Equation~\ref{eq:app-reuse-gains}.
Within each specified region, every comparison of a threshold coordinate
with a stored outcome coordinate has fixed truth value. All joint
indicators are therefore constant there. Treating equality faces separately
handles the convention that meeting a threshold exactly counts as success.
Under relaxation, a successful fixed record cannot become unsuccessful.
Consequently,
$\bar S_{u\to v}(p)=\widehat S_u(p)+n_p(u,v)/m$.
Subtracting the identity for $\pi$ from that for $\pi'$ gives
Equation~\ref{eq:app-reuse-relaxation}. Rearranging gives the strict
reversal, tie, and preservation conditions.
\end{proof}

Each fixed-record score is nondecreasing under relaxation, while its
difference from another score need not be.
This monotonicity does not automatically extend to profile-conditioned
executions.

\subsection{Identical Marginals Can Conceal a Ranking Reversal}

Table~\ref{tab:app-jsr-coupling} gives synthetic arrays consistent with
the aggregation definitions. With $N=10$, accuracy $1$ or $0.5$ means
ten or five correct answers, and each question can have the row's latency
and cost. These are not measured records or a claim of realizability
in the particular cache.

\begin{table}[htbp]
\centering
\caption{Synthetic source records with identical marginal distributions.
Each triple is (batch accuracy, maximum question latency, mean question cost).}
\label{tab:app-jsr-coupling}
\begin{tabular}{ccc}
\toprule
Seed & $\pi$: $(A_u^\pi,L_u^\pi,C_u^\pi)$
     & $\pi'$: $(A_u^{\pi'},L_u^{\pi'},C_u^{\pi'})$\\
\midrule
1 & $(1,2,1)$   & $(1,1,1)$\\
2 & $(1,3,1)$   & $(1,3,3)$\\
3 & $(0.5,1,3)$ & $(0.5,2,1)$\\
\bottomrule
\end{tabular}
\end{table}

Choose $u=(0.75,1.5,2)$ and $v=(0.75,3.5,2)$, so only the latency ceiling
is relaxed. At $u$, $\pi$ succeeds on no seed and $\pi'$ on seed 1.
At $v$, reclassification accepts seeds 1 and 2 for $\pi$, but still only
seed 1 for $\pi'$. Source scores $(0,1/3)$ therefore become $(2/3,1/3)$,
reversing $\bar\Delta_u$ from $1/3$ to $-1/3$.
Here $n_\pi=2$ and $n_{\pi'}=0$, verifying the reversal criterion.

Yet both programs have the same marginal multisets:
$A:\{1,1,0.5\}$, $L:\{1,2,3\}$, and $C:\{1,1,3\}$.
Their cross-seed mean vectors are both $(5/6,2,5/3)$, and their marginal
pass rates at $v$ are both $(2/3,1,2/3)$.
Their identical full marginals give identical individual pass rates at
\emph{every} threshold, yet different joint scores here.
Thus even full marginals, and hence their means or individual pass rates,
cannot determine joint satisfaction rate (JSR) or comparison direction because they omit which
conditions hold together in the same seed batch.

\paragraph{A sufficient ordering condition.}
Stronger paired evidence does preserve noninferiority. If every $s$ satisfies
\begin{equation}
A_u^{\pi'}(s)\ge A_u^\pi(s),\qquad
L_u^{\pi'}(s)\le L_u^\pi(s),\qquad
C_u^{\pi'}(s)\le C_u^\pi(s),
\label{eq:app-reuse-dominance}
\end{equation}
then $\bar\Delta_u(z)\ge0$ for all $z\in\mathcal P$.
Indeed, these inequalities imply
$R_{u,s}^{\pi}\subseteq R_{u,s}^{\pi'}$, so each old success is also a
new success. Conversely, per-seed region containment implies the three
inequalities by choosing
$z=(A_u^\pi(s),L_u^\pi(s),C_u^\pi(s))$.
This equivalence is per seed. Paired dominance is only sufficient for the
aggregate ordering.
Swapping $(1,1,2)$ and $(1,2,1)$ between two seeds preserves every
aggregate JSR but violates paired dominance.
Nor does dominance imply strict improvement everywhere. At $a_z=0$ and
ceilings above all finite recorded resources, both scores equal one.

\subsection{Reclassification versus Target Execution}

Now suppose both programs are also executed under $v$ on the same
environment, cache, question set and seed panel, with complete finite
target outcomes. For $z\in\{u,v\}$, define the actual comparison and
execution discrepancies
\begin{equation}
\Delta(z)=\widehat S_z(\pi')-\widehat S_z(\pi),\qquad
\epsilon_p(u,v)=\widehat S_v(p)-\bar S_{u\to v}(p).
\end{equation}
Adding and subtracting the reclassified scores yields
\begin{equation}
\begin{aligned}
\Delta(v)-\Delta(u)
={}&\underbrace{\bar\Delta_u(v)-\bar\Delta_u(u)}_{
\text{threshold changes on source records}}\\
&+\underbrace{\epsilon_{\pi'}(u,v)-\epsilon_\pi(u,v)}_{
\text{profile-conditioned execution changes}}.
\end{aligned}
\label{eq:app-reuse-decomposition}
\end{equation}
To verify the identity, substitute
$\widehat S_v(p)=\bar S_{u\to v}(p)+\epsilon_p(u,v)$ for each program
and use $\Delta(u)=\bar\Delta_u(u)$.
The fixed-record analysis above describes only the first term.
If every paired outcome triple remains unchanged for both programs,
then each $\epsilon_p=0$. In this case, reclassification gives the exact
target joint satisfaction rate (JSR) and the fixed-record comparison applies to actual execution.
This special case is not assumed for the source bank.
The same code, cache and seed do not establish outcome invariance when
the program reads the profile.

Source comparisons support contextual revision hypotheses, while target
replay evaluates proposed programs.
This analysis does not assert complete per-seed bank records, implemented
cell enumeration or dominance checks, or sufficient Guide conditions.
It guarantees neither nearest-neighbor transfer, held-out generalization,
nor discovery-cost savings.

\subsection{What Constraint Diagnostics Identify}
\label{app:diagnostic_limits}

Fix one program, one profile, and a complete finite panel $\Omega$ of
$m\ge1$ equally weighted seeds. Let $E_A,E_L,E_C\subseteq\Omega$ contain
the seeds passing each requirement, and write
$r_i=|E_i|/m$ and
$J=|E_A\cap E_L\cap E_C|/m=\widehat S_u(\pi)$.
These are exact pass fractions, not rounded estimates.
For distinct $i,j,k\in\{A,L,C\}$, define
$H_i=|E_j\cap E_k\cap E_i^c|/m$, the fraction failing only constraint $i$.
Complements are relative to $\Omega$.

\begin{pttsproposition}[Sharp diagnostic limits]
\label{prop:diagnostic_limits}
Every feasible summary $(r_A,r_L,r_C,J)$ on an $m$-seed panel satisfies
\begin{equation}
\max\{0,r_j+r_k-1-J\}\le H_i
\le\min\{1-r_i,r_j-J,r_k-J\}.
\label{eq:diagnostic_limits}
\end{equation}
For each fixed $i$ separately, both endpoints and every intervening
multiple of $1/m$ are attained by abstract pass panels with the same
summary.
For modified sets satisfying $E_j'=E_j$ and $E_k'=E_k$, the new joint
rate obeys
$-J\le J'-J\le H_i$.
If also $E_i\subseteq E_i'$, then $0\le J'-J\le H_i$.
\end{pttsproposition}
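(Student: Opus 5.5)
The plan is to reduce everything to the eight atoms of the Venn diagram of $E_A,E_L,E_C$ inside $\Omega$, working with counts, i.e.\ multiples of $1/m$. Fix distinct $i,j,k$. The joint set is $T=E_A\cap E_L\cap E_C$, with $|T|=mJ$, and $H_i$ is the normalized size of the atom $E_j\cap E_k\cap E_i^c$. The key identity is the disjoint decomposition $E_j\cap E_k=T\sqcup(E_j\cap E_k\cap E_i^c)$, so $|E_j\cap E_k|/m=J+H_i$.

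\textbf{Inequalities.} For the upper bound, note that the atom is contained in $E_i^c$, which gives $H_i\le 1-r_i$. It is also contained in $E_j\setminus T$, which gives $H_i\le r_j-J$, and symmetrically $H_i\le r_k-J$. For the lower bound, inclusion--exclusion gives $|E_j\cap E_k|\ge|E_j|+|E_k|-m$. Together with the identity above this yields $H_i\ge r_j+r_k-1-J$, and trivially $H_i\ge0$.

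\textbf{Attainment.} This is the step needing a construction. Fix the summary and $i$. Remove a block $T$ of $mJ$ seeds, leaving $M=m-mJ$ seeds. Set $n_\ell=m(r_\ell-J)$ for each $\ell$; these are nonnegative integers because $T\subseteq E_\ell$. Any panel with this summary consists of sets $E_\ell'=E_\ell\setminus T$ of sizes $n_\ell$ in the residual universe, subject to one constraint: $E_A'\cap E_L'\cap E_C'=\emptyset$. Conversely, any such sets give a panel with the same summary. Now take any integer $h$ with
\begin{equation*}
\max\{0,n_j+n_k-M\}\le h\le\min\{n_j,n_k,M-n_i\}.
\end{equation*}
Place $E_j'$ and $E_k'$ so that they overlap in exactly $h$ seeds. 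This is possible since $h$ lies between the minimal and maximal overlap. Then put $E_i'$ in the $M-h$ seeds outside that overlap, which is possible since $n_i\le M-h$. The triple intersection is empty, and the $i$-only-failing atom has exactly $h$ seeds. Rescaling by $m$, the interval for $h$ is exactly the interval in \eqref{eq:diagnostic_limits}. That interval is nonempty because the given feasible panel's own $H_i$ lies in it by the first part. Hence every multiple of $1/m$ between the endpoints, including both endpoints, is attained.

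\textbf{Modification claims.} With $E_j'=E_j$ and $E_k'=E_k$, the new joint set is $E_i'\cap E_j\cap E_k\subseteq E_j\cap E_k$. Therefore $0\le J'\le J+H_i$ by the identity above, which gives $-J\le J'-J\le H_i$. If moreover $E_i\subseteq E_i'$, then $T\subseteq E_i'\cap E_j\cap E_k$, so $J'\ge J$.

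The main obstacle is the attainment step. The care needed there is in checking that the only coupling constraint after removing $T$ is the emptiness of the residual triple intersection, so that the three residual sets can be placed freely as described.
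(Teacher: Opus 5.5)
Your proposal is correct and follows essentially the same route as the paper: the bounds come from $|E_j\cap E_k|=m(J+H_i)$ together with the two-set intersection bounds and the fact that $E_i\setminus T$ must avoid $E_j\cap E_k$. Your attainment construction is the paper's four-block partition with $E_i$ made of the joint block plus seeds outside $E_j\cap E_k$, written in the residual universe after removing $T$, so your $h$ is the paper's $q-t$.
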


\begin{proof}
Let $a=mr_j$, $b=mr_k$, $c=mr_i$, $t=mJ$, and
$q=|E_j\cap E_k|$. Two-set intersection bounds, containment of the
$t$ joint successes, and room for the $c-t$ other members of $E_i$ give
\[
\max\{t,a+b-m\}\le q\le\min\{a,b,t+m-c\}.
\]
Since $mH_i=q-t$, subtracting $t$ proves the bounds.
Conversely, take any integer $q$ in this interval.
Partition the panel into four sets of sizes
$q,a-q,b-q,m-a-b+q$, representing respectively the intersection of
$E_j,E_k$, their two exclusive parts, and neither.
All sizes are nonnegative.
Choose $E_i$ to contain $t$ intersection members and $c-t$ complement
members. The interval guarantees room for both choices.
This realizes the prescribed summary and $mH_i=q-t$, proving sharpness
at every allowed grid value.

Finally, preserving $E_j,E_k$ gives $0\le J'\le q/m$, hence
$-J\le J'-J\le H_i$.
If $E_i\subseteq E_i'$, no old success is lost.
Without the pass-preservation restriction, $E_i'=\emptyset$ or $\Omega$
attains the respective extreme changes at the pass-array level without
asserting executable realizability.
\end{proof}

\paragraph{When summaries suffice.}
If $r_A=1$, then $E_A=\Omega$, and hence
$H_L=r_C-J$, $H_C=r_L-J$, and $H_A=0$.
If also $r_C=1$, then $H_L=1-J=1-r_L$.
More generally, $r_C=J$ implies that any strict joint satisfaction rate (JSR) improvement must
increase the cost pass fraction, since $J'\le r_C'$.
This does not require $r_A=1$ and holds even if accuracy and latency
pass sets change.

\paragraph{When summaries remain ambiguous.}
Consider synthetic three-seed pass arrays, with coordinates $(A,L,C)$:
\[
P=((1,1,1),(0,1,1),(1,0,0)),\qquad
Q=((1,1,1),(1,0,1),(0,1,0)).
\]
Both have $r_A=r_L=r_C=2/3$ and $J=1/3$, but their $(H_A,H_L)$ values
are $(1/3,0)$ and $(0,1/3)$ respectively.
Thus even the joint score and all marginal pass rates need not identify
which isolated repair could help.
These arrays are illustrative, not measured cache outcomes.

This application of elementary finite-set intersection bounds concerns
outcome-level opportunities.
It does not implement a repair rule or assume that a continuation-cap edit
changes latency alone.
Source summaries describe source executions.
When the paired outcome triples remain unchanged, reclassification of
source records coincides with target execution.
Target replay continues to evaluate the full generated program.

\section{Prompts}\label{app:policy_prompt}\label{app:discovery_prompt}
In this section, we present the full prompt used in our policy-discovery experiments.
\begin{tcolorbox}[enhanced,breakable,
        colback=gray!5,
        colframe=black,
        boxrule=0.8pt,
        arc=2pt,
        left=10pt, right=10pt, top=10pt, bottom=10pt,
        title=\textbf{PersonTTS Policy-Discovery Agent Prompt}
    ]
        \normalfont
        \footnotesize
        \raggedright
        \setlength{\parindent}{0pt}
        System Message: \\
        You are the analysis-and-implementation Agent for one AutoTTS-style policy-discovery round. Analyze the supplied aggregate feedback and sanitized executed traces, then edit \texttt{candidate.py} in this same session. You may write only \texttt{candidate.py}. Your only permitted workspace is \texttt{\{allowed\_root\}}. For structured file tools, use absolute paths only under that exact workspace. For Bash, the working directory is already set: use relative paths only. Never use \texttt{/home/work} or inspect any parent, root, or other filesystem path.
        \par\vspace{1em}
        
        User Message: \\
        You are improving \texttt{candidate.py}, a controller for an offline mathematical-reasoning replay environment. Use one continuous Agent session to inspect the supplied search evidence and then implement the policy improvement directly in \texttt{candidate.py}.
        Work only inside the current workspace. Read \texttt{POLICY\_API.md}, \texttt{SEED\_POLICY.py}, \texttt{POLICY\_BASE.py}, \texttt{feedback.json}, \texttt{history.json}, \texttt{TRACE\_GUIDE.md}, \texttt{TRACE\_SUMMARY.json}, and \texttt{STRATEGY\_GUIDE.md}; in personalized runs also read \texttt{REQUIREMENT\_PROFILES.json}. \texttt{STRATEGY\_GUIDE.md} contains cross-profile optimization experience --- proven patterns and known failure modes from hundreds of completed runs; read it first to avoid common mistakes. \texttt{CANDIDATE\_HISTORY/} contains proposal code from completed rounds, and \texttt{TRACE\_HISTORY/} contains their complete sanitized executed traces. \texttt{SEARCH\_TRACES.jsonl} is the latest completed-round trace alias. When present, \texttt{BASELINE\_TRACES.jsonl} is the complete sanitized template-baseline trace. Do not spend tool calls searching outside these documented inputs.
        \par\vspace{0.5em}
        
        \textcolor{gray}{Hard data boundary:} \\
        - Never inspect parent directories, backing outputs, private replay caches, held-out datasets, or any filesystem path outside the supplied workspace. \\
        - Never search for question text, gold answers, raw reasoning, raw feedback, source paths, or unexecuted future checkpoints. \\
        - The trace JSONL files contain only anonymized observations and actions actually executed by evaluated policies. They remain read-only. \\
        - Do not load or `Read` a complete trace JSONL into model context. Use `TRACE\_ANALYZER.py summary` and bounded, paginated `cases --offset N --limit M` queries for statistical analysis and targeted inspection; use aggregate summaries first. \\
        - Do not add file, network, subprocess, reflection, dynamic import, or external private-attribute access to the candidate. Deterministic helpers defined on `CandidatePolicy` are allowed.
        Trace inspection is evidence gathering, not a coverage proof. Inspect the aggregate results, then autonomously choose useful anomalous groups and representative traces. You are not required to claim that every record entered context, report `records\_analyzed`, or create an analysis report. After sufficient analysis, edit `candidate.py` in this same session. Finish only after it contains one valid `CandidatePolicy`; do not merely describe a plan.
        \par\vspace{0.5em}
        
        \textcolor{gray}{Model availability rule:} \\
        - `POLICY\_API.md` and runtime `env.models` are authoritative for the complete model pool available to a policy. \\
        - Models observed in traces only show what the incumbent or historical candidate actually spawned. Never infer that an observed subset, including only Qwen3-4B, is the available model set.
        \textcolor{gray}{Algorithm contract:} \\
        - A branch chooses its model in `spawn(model)` and can never switch models. \\
        - `refine(branch\_id)` is legal only when the current stage is terminal. \\
        - `env.requirements` is the current user's explicit accuracy, latency, and weighted-compute hard-requirement profile. Use all three thresholds directly to choose model, width, checkpoints, and refine behavior. \\
        - Personalized discovery has no `beta`. Do not invent or infer a replacement scalar from the three explicit requirements. \\
        - Public observations and action return values are dictionaries. Use bracket access such as `state["branches"]` and `branch["branch\_id"]`. \\
        - Maximize the explicit utility in `feedback.json`: the profile-weighted fraction of complete seed batches that simultaneously satisfy all three hard constraints. A proposal is accepted only when this utility strictly exceeds the incumbent utility. \\
        - Latency follows replay cohort semantics: sequential actions add; each `*\_many` phase adds the maximum cohort latency. Weighted compute includes think, probe, and feedback tokens weighted by model size. \\
        - Base decisions only on public observations already revealed by executed actions, and keep the controller deterministic for a fixed replay seed.
        Only `candidate.py` is writable. Do not edit the API, template/baseline copies, profiles, feedback, history, candidate history, analyzer, summaries, or traces. Do not claim performance absent from the supplied evidence.
    \end{tcolorbox}

    {
\captionsetup{type=figure}
\captionof{figure}{\textbf{Policy-discovery prompt.} The System Message and User Message are presented together in the single framed prompt box used for one policy-improvement round.}
    \label{fig:policy_discovery_prompt}
}

\section{Public Policy API}\label{app:public_api}
\begin{tcolorbox}[enhanced,breakable,
        colback=gray!5,
        colframe=black,
        boxrule=0.8pt,
        arc=2pt,
        left=10pt, right=10pt, top=10pt, bottom=10pt,
        title=\textbf{Public Policy API}
    ]
        \normalfont
        \footnotesize
        \raggedright
        \setlength{\parindent}{0pt}
        CandidatePolicy(config=\{"requirements": profile\}) must expose \texttt{solve(self, env)} and return an observed answer or \texttt{None}. Personalized runs expose only explicit requirements and resources; legacy beta-grid runs separately receive \texttt{beta}.
        \par\vspace{0.8em}
        
        The policy may use:
        - \texttt{env.models}: the authoritative complete set of model names available in the current episode. Models observed in baseline or historical traces only describe what those policies spawned and must never be used to narrow this set. \\
        - \texttt{env.beta}: available only in legacy beta-grid evaluation. Personalized discovery has no beta field in the environment, observation, or policy config. \\
        - \texttt{env.requirements}: an isolated copy of the current profile, containing exactly one hard constraint for accuracy, latency, and weighted compute. The canonical \texttt{constraints} list is preserved, and each constraint is also available directly by metric name (for example, \texttt{env.requirements["latency\_s"]["value"]}). It is \texttt{None} only in legacy beta evaluation. \\
        - \texttt{env.observe()}: current public state with model pool counts, opened branches, and charged cost. \\
        - \texttt{env.spawn(model)}: create one model-bound branch and reveal checkpoint 0. \\
        - \texttt{env.spawn\_many(model, count)}: atomically open a fixed-model cohort. \\
        - \texttt{env.continue\_branch(branch\_id)}: reveal the next checkpoint in the current stage. \\
        - \texttt{env.continue\_many(branch\_ids)}: atomically advance one synchronous checkpoint round. \\
        - \texttt{env.refine(branch\_id)}: after a terminal stage, charge feedback tokens and enter the next stage without revealing its first checkpoint. \\
        - \texttt{env.refine\_many(branch\_ids)}: atomically refine a survivor cohort after every member is terminal. \\
        - \texttt{env.can\_afford\_complete\_candidates(...)}: report only whether a complete configured candidate/window fits the remaining pool and weighted-compute budget. \\
        - \texttt{env.prune(branch\_id)}: deactivate a branch. \\
        - \texttt{env.finish(answer=None)}: terminate using an observed answer, or use deterministic plurality when omitted.
        \par\vspace{0.6em}
        
        \texttt{env.observe()} and every state-changing action return Python dictionaries. Use bracket access for returned fields.
        \par\vspace{0.4em}
        
        \texttt{state = env.observe()} \\
        \texttt{for branch in state["branches"]:} \\
        \texttt{\hspace*{2.0em}if branch["can\_continue"]:} \\
        \texttt{\hspace*{4.0em}env.continue\_branch(branch["branch\_id"])}
        \par\vspace{0.4em}
        
        \texttt{opened = env.spawn\_many(env.models[0], 4)} \\
        \texttt{env.continue\_many(} \\
        \texttt{\hspace*{2.0em}[branch["branch\_id"] for branch in opened if branch["can\_continue"]]} \\
        \texttt{)}
        \par\vspace{0.6em}
        
        Each public branch dictionary contains \texttt{branch\_id}, \texttt{model}, \texttt{stage\_index}, \texttt{stage\_kind}, \texttt{checkpoint\_index}, \texttt{latest\_answer}, \texttt{answer\_history}, \texttt{stage\_terminal}, \texttt{can\_continue}, \texttt{can\_refine}, and \texttt{pruned}. No question, gold answer, raw generation, raw feedback, source path, future answer, or future token delta is public.
        In personalized discovery, \texttt{REQUIREMENT\_PROFILES.json} is the immutable profile population. Each seed evaluates one complete configured discovery-problem batch for every profile. A batch scores 1 only when all three constraints are satisfied; the discovery utility is the profile-weighted batch satisfaction rate. There is no compatibility beta in this mode: use the three thresholds directly.
        \texttt{SEED\_POLICY.py} contains the adapted AutoTTS baselines \texttt{ASCConfigured4D}, \texttt{ESCConfigured4D}, and \texttt{ParallelProbeSR}. Their standard evaluation uses an explicit fixed \texttt{ResourceProfile}; it does not map \texttt{beta} to a hand-authored model/width/depth/stage schedule.
        Allowed imports are \texttt{collections}, \texttt{math}, \texttt{statistics}, \texttt{typing}, \texttt{policies.base}, and \texttt{policies.seeds}. Candidate validation rejects I/O, subprocesses, networking, dynamic imports, reflection, external private attributes, and forbidden data references. A constructor may call \texttt{super().\_\_init\_\_(...)}; deterministic private helpers defined on \texttt{CandidatePolicy} are also allowed.
    \end{tcolorbox}

    {
\captionsetup{type=figure}
\captionof{figure}{\textbf{Public policy API.} The public environment contract, state fields, actions, personalized utility definition, and candidate restrictions used by the offline replay controller.}
    \label{fig:public_policy_api}
}

\section{Meta-Prompt for Strategy Guide Generation}\label{app:guide_generation}
\begin{tcolorbox}[enhanced,breakable,
        colback=gray!5,
        colframe=black,
        boxrule=0.8pt,
        arc=2pt,
        left=8pt, right=8pt, top=6pt, bottom=6pt,
        title=\textbf{Strategy Guide Generation from the Discovery Experience Bank}
    ]
        \normalfont
        \footnotesize
        \raggedright
        \setlength{\parindent}{0pt}
        You are a research agent tasked with distilling cross-user procedural knowledge from a historical policy-discovery experience bank. Produce a compact natural-language skill that helps a discovery agent design and revise personalized inference policies for new user profiles.
        \par\vspace{0.35em}
        
        \textcolor{gray}{Objective and scope:} \\
        Distill how to search, rather than which historical policy to copy. Improve initial candidate design, diagnosis of evaluation feedback, and subsequent revisions under user-specific accuracy, latency, and compute requirements. Use the original evaluator's joint profile satisfaction rate as the policy-quality objective. Preserve metric directions, aggregation levels, evaluation units, and profile weights. Per-constraint satisfaction rates and margins are diagnostic information, not replacement objectives. Distinguish policy satisfaction from proposal acceptance, preserve high-quality rejected candidates, and do not classify ceiling-score ties as policy failures.
        Do not modify the evaluator, policy interface, model weights, candidate-selection rule, or downstream discovery protocol. Do not introduce historical-policy retrieval or warm-start initialization. Under a fixed-round protocol, preserve all rounds, including score plateaus. Use only authorized source experience and supplied system specifications; keep original artifacts read-only and do not launch new discovery experiments or unconfigured external model calls.
        \par\vspace{0.6em}
        
        \textcolor{gray}{Inventory, alignment, and scale-adaptive processing:} \\
        Inventory authorized profiles, runs, rounds, environments, and evaluation configurations. Recover candidate, incumbent, parent, and feedback relationships from artifact identities, code versions or hashes, logs, and evaluation configurations; equal utility values alone do not verify identity. Do not assume a fixed bank size, balanced strata, complete trajectories, or that feedback stored in a round evaluates that round's proposal.
        Build a compact case index containing profile requirements, pre-generation feedback, candidate identity, design summary, joint satisfaction, constraint diagnostics, execution signals, and source pointers. When the bank exceeds the context budget, process histories in a reproducible configured order, preserve temporal context, persist intermediate summaries with source pointers, and merge rules across batches. Recompute statistics from unique underlying records, retain legitimate repeated runs, revisit source cases supporting retained rules, and report exact coverage, omissions, exclusions, ordering, batching, and any configured subsampling.
        \par\vspace{0.6em}
        
        \textcolor{gray}{Analyze discovery experience:} \\
        Include initial designs, improvements, unchanged outcomes, regressions, and recoveries. Prioritize within-profile comparisons under compatible evaluation conditions. Distinguish satisfaction changes from progress relative to the incumbent, track numerical versus structural changes, and account descriptively for requirements, environments, discovery stage, and incumbent level. Aggregate within profiles before averaging across profiles, preserve dependence across repeated rounds, and do not reconstruct joint satisfaction from marginal rates.
        Inspect actual code and available execution traces for evidence-bearing cases. Static parsing can identify patterns but does not establish runtime behavior; a named variable, API call, or confidence condition is not proof of execution. Describe simultaneous changes as bundles unless attribution is supported. Treat undetected behavior as unknown, not absent.
        \par\vspace{0.6em}
        
        \textcolor{gray}{Abstract and audit conditional rules:} \\
        For every retained rule, establish the chain: observable requirements and prior feedback, diagnostic hypothesis, proposed design or revision, execution and joint-satisfaction outcomes, applicability conditions, and limitations. Search for supporting, ineffective, and contradictory cases. Separate measured outcomes from plausible explanations and label evidence as MECHANISM, OBSERVATION, HYPOTHESIS, or INCONCLUSIVE.
        Express each rule with a Rule ID and title, Trigger, Evidence, Action, and Verify. The trigger must be observable during discovery; the evidence must include supporting statistics or mechanisms and source pointers; the action must provide a concrete design choice and fallback ordering; and verification must compare the resulting joint satisfaction with the incumbent. Rules must guide both initial design and later feedback-driven revisions without requiring hidden labels, inaccessible feasibility information, or fixed benchmark identities.
        \par\vspace{0.6em}
        
        \textcolor{gray}{Separate contracts, calibration, and experience:} \\
        Organize the generated skill into Environment Contracts, Local Calibration, and Experience-Derived Guidance. Keep interface invariants distinct from empirical preferences. Check units and aggregation levels in every formula; distinguish means from tail quantiles, action costs from episode costs, token counts from checkpoint counts, and loop limits from realized depth. Treat approximate estimates as lower bounds or heuristics rather than feasibility certificates.
        Allow joint consideration of model, width, depth, and stopping behavior. Handle legal shallow configurations, missing estimates, empty answers, and state refreshes according to the actual interface. Evaluate revisions by joint satisfaction: a change in one constraint is not sufficient by itself to reject a revision that improves the joint objective.
        \par\vspace{0.6em}
        
        \textcolor{gray}{Deliverables and final review:} \\
        Produce a self-contained English \texttt{STRATEGY\_GUIDE.md} within the configured length limit, organized as Environment Contracts, Local Calibration, Conditional Search Heuristics, Decision Flow, and Failure Diagnosis. Produce \texttt{SKILL\_EVIDENCE.md} using the same rule IDs, with source coverage, definitions, supporting and contradictory cases, artifact locations, limitations, and processing provenance. Do not reproduce the entire bank, include historical answer lookups, force one controller template, or claim improved efficiency or generalization before independent evaluation.
        After drafting, perform one consolidated review of evidence provenance, arithmetic, units, boundary cases, and consistency across rules, examples, decision flow, and diagnosis tables. Correct unsupported claims without launching effectiveness experiments. Save the reviewed outputs, freeze them, and conclude with a brief coverage and limitation statement.
    \end{tcolorbox}

    {
\captionsetup{type=figure}
\captionof{figure}{\textbf{Strategy-guide generation meta-prompt.} A provenance-aware procedure for distilling conditional cross-user controller-design knowledge from authorized discovery experience while separating environment contracts, local calibration, and empirical guidance.}
    \label{fig:strategy_guide_meta_prompt}
}

\section{General Methodology from Training-Set Discovery Traces}
\label{App:training_trace_methodology}
We distill the training-set discovery traces into a controller-design
workflow. The summary separates replay contracts, local calibration,
search heuristics, and failure diagnosis so that the design can be applied
to new user profiles with profile-specific re-estimation.
\begin{tcolorbox}[enhanced,breakable,
        colback=gray!5,
        colframe=black,
        boxrule=0.8pt,
        arc=2pt,
        left=10pt, right=10pt, top=10pt, bottom=10pt,
        title=\textbf{Controller Design Methodology from Training-Set Traces}
    ]
        \normalfont
        \footnotesize
        \raggedright
        \setlength{\parindent}{0pt}
        \textcolor{gray}{1. Evidence and scope:} \\
        Read the strategy guide before inspecting traces. Use aggregate summaries first, then inspect bounded representative cases and anomalous groups. The evidence consists of executed training-set policies; raw questions, gold answers, private caches, and unexecuted checkpoints remain outside the evidence.
        \par\vspace{0.6em}
        
        \textcolor{gray}{2. Calibrate the replay cost model:} \\
        Estimate the mean phase latency from the trace summary and estimate tokens per continue action from aggregate action counts. Use the authoritative model weights and the current \texttt{env.requirements}. Parallel \texttt{spawn\_many} and \texttt{continue\_many} use cohort maxima for latency, whereas weighted compute is summed over all branches. These estimates are heuristic because the scorer uses maximum per-question latency within each seed batch and accounts for action-level variation.
        \par\vspace{0.6em}
        
        \textcolor{gray}{3. Optimize the joint hard-constraint utility:} \\
        A seed batch is satisfied only when accuracy, latency, and weighted compute all pass simultaneously. The optimization target is the profile-weighted fraction of satisfied batches, and proposal acceptance is strict improvement over the incumbent. Per-constraint satisfaction rates and margins diagnose failures; joint satisfaction rate (JSR) remains the acceptance criterion.
        \par\vspace{0.6em}
        
        \textcolor{gray}{4. Construct a feasible policy before optimizing quality:} \\
        Try several \texttt{(width, loop\_limit)} combinations at the current model using the calibrated estimates. If none is feasible, move to a lighter model. The default controller structure is parallel initial spawning, parallel continuation rounds, and deterministic plurality voting over usable final answers. A zero continuation limit is valid when the budget permits only spawn actions.
        \par\vspace{0.6em}
        
        \textcolor{gray}{5. Match depth and width to the bottleneck:} \\
        In the observed math-reasoning traces, accuracy is the most common bottleneck. Increase depth when branches remain active at the loop limit and accuracy is binding; reduce depth when resources bind and branches terminate earlier. Width primarily adds voting diversity and should increase only when compute headroom supports it.
        \par\vspace{0.6em}
        
        \textcolor{gray}{6. Treat early stopping and refine as conditional mechanisms:} \\
        Use confidence-based early stopping or refinement only when the relevant budget and failure diagnosis justify it. Early stopping trades latency for accuracy, while refinement adds tokens and latency; verify the joint utility after either change.
        \par\vspace{0.6em}
        
        \textcolor{gray}{7. Diagnose failures before changing parameters:} \\
        After a failed round, read per-constraint satisfaction rates and margins, then check whether the loop limit was actually reached. Small negative margins call for an incremental change of one or two rounds or branches; large gaps call for recalibration from the new trace summary. Do not assume that fixing the lowest individual satisfaction rate will improve the joint rate, because different seed batches can fail on different constraints.
        \par\vspace{0.6em}
        
        \textcolor{gray}{8. Reallocate only verified headroom:} \\
        Latency headroom can support a small increase in continuation depth, and compute headroom can support a small increase in branch width, but both changes must be checked against the other constraints. Reducing depth or width to exploit accuracy headroom is risky because terminal-answer quality can change non-linearly.
        \par\vspace{0.6em}
        
        \textcolor{gray}{9. Generalization and limits:} \\
        The guide summarizes 488 failed proposals, 800 observed transitions, and 200 training profiles. These statistics define its empirical scope; preserve determinism, use public observations, and re-estimate costs and joint satisfaction after each meaningful policy change.
    \end{tcolorbox}

    {
\captionsetup{type=figure}
\captionof{figure}{\textbf{General controller-design methodology.} A training-trace-derived workflow for calibrating replay costs, satisfying explicit accuracy--latency--compute constraints, diagnosing failures, and making conservative policy updates. Empirical statistics are confined to the training-set discovery evidence and should not be interpreted as universal laws.}
    \label{fig:training_trace_methodology}
}

%% file: iclr2027_conference.bib
@misc{zheng2026dreamrsirecursiveselfimprovementevolving,
      title={Dream-RSI: Recursive Self-Improvement through Evolving Worlds}, 
      author={Tong Zheng and Xidong Wu and Zheng Zhang and Zhankui He and Chaoyi Zhang and Benjamin Coleman and Ruoqiao Wei and Di Bai and Haolin Liu and Rui Liu and Xue Wang and Yue Zhuan and Wang-Cheng Kang and Renkai Xiang and Heng Huang and Xinwu Cheng and Yunsong Guo},
      year={2026},
      eprint={2609.14858},
      archivePrefix={arXiv},
      primaryClass={cs.CL},
      url={https://arxiv.org/abs/2609.14858}, 
}

@inproceedings{ASC,
  title={Let’s Sample Step by Step: Adaptive-Consistency for Efficient Reasoning and Coding with LLMs},
  author={Aggarwal, Pranjal and Madaan, Aman and Yang, Yiming and others},
  booktitle={Proceedings of the 2023 Conference on Empirical Methods in Natural Language Processing},
  pages={12375--12396},
  year={2023}
}

@inproceedings{ESC,
  title={Escape sky-high cost: Early-stopping self-consistency for multi-step reasoning},
  author={Li, Yiwei and Yuan, Peiwen and Feng, Shaoxiong and Pan, Boyuan and Wang, Xinglin and Sun, Bin and Wang, Heda and Li, Kan},
  booktitle={International Conference on Learning Representations},
  volume={2024},
  pages={14751--14768},
  year={2024}
}

@inproceedings{SC,
  author       = {Xuezhi Wang and
                  Jason Wei and
                  Dale Schuurmans and
                  Quoc V. Le and
                  Ed H. Chi and
                  Sharan Narang and
                  Aakanksha Chowdhery and
                  Denny Zhou},
  title        = {Self-Consistency Improves Chain of Thought Reasoning in Language Models},
  booktitle    = {The Eleventh International Conference on Learning Representations,
                  {ICLR} 2023, Kigali, Rwanda, May 1-5, 2023},
  publisher    = {OpenReview.net},
  year         = {2023},
  url          = {https://openreview.net/pdf?id=1PL1NIMMrw},
  bibsource    = {dblp computer science bibliography, https://dblp.org}
}

@inproceedings{MATH,
  author       = {Dan Hendrycks and
                  Collin Burns and
                  Saurav Kadavath and
                  Akul Arora and
                  Steven Basart and
                  Eric Tang and
                  Dawn Song and
                  Jacob Steinhardt},
  title        = {Measuring Mathematical Problem Solving With the {MATH} Dataset},
  booktitle    = {Proceedings of the Neural Information Processing Systems Track on
                  Datasets and Benchmarks 1, NeurIPS Datasets and Benchmarks 2021, December
                  2021, virtual},
  year         = {2021},
  url          = {https://datasets-benchmarks-proceedings.neurips.cc/paper/2021/hash/be83ab3ecd0db773eb2dc1b0a17836a1-Abstract-round2.html},
  bibsource    = {dblp computer science bibliography, https://dblp.org}
}

@inproceedings{wu2025inference,
  title={Inference scaling laws: An empirical analysis of compute-optimal inference for LLM problem-solving},
  author={Wu, Yangzhen and Sun, Zhiqing and Li, Shanda and Welleck, Sean and Yang, Yiming},
  booktitle={The Thirteenth International Conference on Learning Representations},
  year={2025}
}

@article{brown2024large,
  title={Large language monkeys: Scaling inference compute with repeated sampling},
  author={Brown, Bradley and Juravsky, Jordan and Ehrlich, Ryan and Clark, Ronald and Le, Quoc V and R{\'e}, Christopher and Mirhoseini, Azalia},
  journal={arXiv preprint arXiv:2407.21787},
  year={2024}
}

@inproceedings{lightman2024let,
  title={Let's verify step by step},
  author={Lightman, Hunter and Kosaraju, Vineet and Burda, Yuri and Edwards, Harrison and Baker, Bowen and Lee, Teddy and Leike, Jan and Schulman, John and Sutskever, Ilya and Cobbe, Karl},
  booktitle={International Conference on Learning Representations},
  volume={2024},
  pages={39578--39601},
  year={2024}
}

@misc{o1,
  title  = {Learning to Reason with LLMs},
  author = {OpenAI},
  url    = {https://openai.com/index/learning-to-reason-with-llms/},
  year   = {2024}
}

@article{ToT,
  title={Tree of thoughts: Deliberate problem solving with large language models},
  author={Yao, Shunyu and Yu, Dian and Zhao, Jeffrey and Shafran, Izhak and Griffiths, Tom and Cao, Yuan and Narasimhan, Karthik},
  journal={Advances in neural information processing systems},
  volume={36},
  pages={11809--11822},
  year={2023}
}

@article{Kimi-k1.5,
  title   = {Kimi k1.5: Scaling Reinforcement Learning with LLMs},
  author  = {{Kimi Team} and Du, Angang and Gao, Bofei and Xing, Bowei and Jiang, Changjiu and Chen, Cheng and Li, Cheng and Xiao, Chenjun and Du, Chenzhuang and Liao, Chonghua and others},
  journal = {arXiv preprint arXiv:2501.12599},
  year    = {2025}
}

@article{DeepSeek-R1,
  title   = {DeepSeek-R1: Incentivizing Reasoning Capability in LLMs via Reinforcement Learning},
  author  = {{DeepSeek-AI} and Daya Guo and Dejian Yang and Haowei Zhang and Junxiao Song and Ruoyu Zhang and Runxin Xu and Qihao Zhu and Shirong Ma and Peiyi Wang and Xiao Bi and Xiaokang Zhang and Xingkai Yu and Yu Wu and Z. F. Wu and Zhibin Gou and Zhihong Shao and Zhuoshu Li and Ziyi Gao and Aixin Liu and Bing Xue and Bingxuan Wang and Bochao Wu and Bei Feng and Chengda Lu and Chenggang Zhao and Chengqi Deng and Chenyu Zhang and Chong Ruan and Damai Dai and Deli Chen and Dongjie Ji and Erhang Li and Fangyun Lin and Fucong Dai and Fuli Luo and Guangbo Hao and Guanting Chen and Guowei Li and H. Zhang and Han Bao and Hanwei Xu and Haocheng Wang and Honghui Ding and Huajian Xin and Huazuo Gao and Hui Qu and Hui Li and Jianzhong Guo and Jiashi Li and Jiawei Wang and Jingchang Chen and Jingyang Yuan and Junjie Qiu and Junlong Li and J. L. Cai and Jiaqi Ni and Jian Liang and Jin Chen and Kai Dong and Kai Hu and Kaige Gao and Kang Guan and Kexin Huang and Kuai Yu and Lean Wang and Lecong Zhang and Liang Zhao and Litong Wang and Liyue Zhang and Lei Xu and Leyi Xia and Mingchuan Zhang and Minghua Zhang and Minghui Tang and Meng Li and Miaojun Wang and Mingming Li and Ning Tian and Panpan Huang and Peng Zhang and Qiancheng Wang and Qinyu Chen and Qiushi Du and Ruiqi Ge and Ruisong Zhang and Ruizhe Pan and Runji Wang and R. J. Chen and R. L. Jin and Ruyi Chen and Shanghao Lu and Shangyan Zhou and Shanhuang Chen and Shengfeng Ye and Shiyu Wang and Shuiping Yu and Shunfeng Zhou and Shuting Pan and S. S. Li and Shuang Zhou and Shaoqing Wu and Shengfeng Ye and Tao Yun and Tian Pei and Tianyu Sun and T. Wang and Wangding Zeng and Wanjia Zhao and Wen Liu and Wenfeng Liang and Wenjun Gao and Wenqin Yu and Wentao Zhang and W. L. Xiao and Wei An and Xiaodong Liu and Xiaohan Wang and Xiaokang Chen and Xiaotao Nie and Xin Cheng and Xin Liu and Xin Xie and Xingchao Liu and Xinyu Yang and Xinyuan Li and Xuecheng Su and Xuheng Lin and X. Q. Li and Xiangyue Jin and Xiaojin Shen and Xiaosha Chen and Xiaowen Sun and Xiaoxiang Wang and Xinnan Song and Xinyi Zhou and Xianzu Wang and Xinxia Shan and Y. K. Li and Y. Q. Wang and Y. X. Wei and Yang Zhang and Yanhong Xu and Yao Li and Yao Zhao and Yaofeng Sun and Yaohui Wang and Yi Yu and Yichao Zhang and Yifan Shi and Yiliang Xiong and Ying He and Yishi Piao and Yisong Wang and Yixuan Tan and Yiyang Ma and Yiyuan Liu and Yongqiang Guo and Yuan Ou and Yuduan Wang and Yue Gong and Yuheng Zou and Yujia He and Yunfan Xiong and Yuxiang Luo and Yuxiang You and Yuxuan Liu and Yuyang Zhou and Y. X. Zhu and Yanhong Xu and Yanping Huang and Yaohui Li and Yi Zheng and Yuchen Zhu and Yunxian Ma and Ying Tang and Yukun Zha and Yuting Yan and Z. Z. Ren and Zehui Ren and Zhangli Sha and Zhe Fu and Zhean Xu and Zhenda Xie and Zhengyan Zhang and Zhewen Hao and Zhicheng Ma and Zhigang Yan and Zhiyu Wu and Zihui Gu and Zijia Zhu and Zijun Liu and Zilin Li and Ziwei Xie and Ziyang Song and Zizheng Pan and Zhen Huang and Zhipeng Xu and Zhongyu Zhang and Zhen Zhang},
  journal = {arXiv preprint arXiv:2501.12948},
  year    = {2025}
}

@article{wang2026time,
  title={On Time, Within Budget: Constraint-Driven Online Resource Allocation for Agentic Workflows},
  author={Wang, Xinglin and Liu, Zishen and Feng, Shaoxiong and Yuan, Peiwen and Li, Yiwei and Shi, Jiayi and Zhang, Yueqi and Tan, Chuyi and Zhang, Ji and Pan, Boyuan and others},
  journal={arXiv preprint arXiv:2605.06110},
  year={2026}
}

@article{huang2025latency,
  title={Latency and token-aware test-time compute},
  author={Huang, Jenny Y and Damani, Mehul and El-Kurdi, Yousef and Astudillo, Ramon and Sun, Wei},
  journal={arXiv preprint arXiv:2509.09864},
  year={2025}
}

@inproceedings{Self-Refine,
  title     = {Self-Refine: Iterative Refinement with Self-Feedback},
  author    = {Madaan, Aman and Tandon, Niket and Gupta, Prakhar and Hallinan, Skyler and Gao, Luyu and Wiegreffe, Sarah and Alon, Uri and Dziri, Nouha and Prabhumoye, Shrimai and Yang, Yiming and Gupta, Shashank and Majumder, Bodhisattwa Prasad and Hermann, Katherine and Welleck, Sean and Yazdanbakhsh, Amir and Clark, Peter},
  booktitle = {Advances in Neural Information Processing Systems (NeurIPS)},
  pages     = {46534--46594},
  volume    = {36},
  year      = {2023}
}

@inproceedings{wang2025make,
  title={Make every penny count: Difficulty-adaptive self-consistency for cost-efficient reasoning},
  author={Wang, Xinglin and Feng, Shaoxiong and Li, Yiwei and Yuan, Peiwen and Zhang, Yueqi and Tan, Chuyi and Pan, Boyuan and Hu, Yao and Li, Kan},
  booktitle={Findings of the Association for Computational Linguistics: NAACL 2025},
  pages={6919--6932},
  year={2025}
}

@inproceedings{zhang2025aflow,
  title={Aflow: Automating agentic workflow generation},
  author={Zhang, Jiayi and Xiang, Jinyu and Yu, Zhaoyang and Teng, Fengwei and Chen, Xionghui and Chen, Jiaqi and Zhuge, Mingchen and Cheng, Xin and Hong, Sirui and Wang, Jinlin and others},
  booktitle={International Conference on Learning Representations},
  volume={2025},
  pages={34040--34077},
  year={2025}
}

@article{lee2026meta,
  title={Meta-harness: End-to-end optimization of model harnesses},
  author={Lee, Yoonho and Nair, Roshen and Zhang, Qizheng and Lee, Kangwook and Khattab, Omar and Finn, Chelsea},
  journal={arXiv preprint arXiv:2603.28052},
  year={2026}
}

@article{dekoninck2026matharena,
      title={Beyond Benchmarks: MathArena as an Evaluation Platform for Mathematics with LLMs}, 
      author={Jasper Dekoninck and Nikola Jovanović and Tim Gehrunger and Kári Rögnvaldsson and Ivo Petrov and Chenhao Sun and Martin Vechev},
      year={2026},
      eprint={2605.00674},
      archivePrefix={arXiv},
      primaryClass={cs.CL},
      url={https://arxiv.org/abs/2605.00674}, 
}

@article{wang2025faster,
  title={Faster and better llms via latency-aware test-time scaling},
  author={Wang, Zili and Zhang, Tianyu and Bai, Haoli and Hou, Lu and Yu, Xianzhi and Liu, Wulong and Xiang, Shiming and Zhu, Lei},
  journal={arXiv preprint arXiv:2505.19634},
  year={2025}
}

@article{qwen3technicalreport,
  title={Qwen3 technical report},
  author={Yang, An and Li, Anfeng and Yang, Baosong and Zhang, Beichen and Hui, Binyuan and Zheng, Bo and Yu, Bowen and Gao, Chang and Huang, Chengen and Lv, Chenxu and others},
  journal={arXiv preprint arXiv:2505.09388},
  year={2025}
}

@article{wang2026share,
  title={Share More, Search Less: Collaborative Parallel Thinking for Efficient Test-Time Scaling},
  author={Wang, Xinglin and Lin, Hao and Feng, Shaoxiong and Yuan, Peiwen and Li, Yiwei and Shi, Jiayi and Zhang, Yueqi and Tan, Chuyi and Zhang, Ji and Pan, Boyuan and others},
  journal={arXiv preprint arXiv:2605.27030},
  year={2026}
}

@inproceedings{snell2025scaling,
  title={Scaling LLM test-time compute optimally can be more effective than scaling parameters for reasoning},
  author={Snell, Charlie and Lee, Jaehoon and Xu, Kelvin and Kumar, Aviral},
  booktitle={International Conference on Learning Representations},
  volume={2025},
  pages={10131--10165},
  year={2025}
}

@article{zheng2026parallelprobe,
  title={Parallel-probe: Towards efficient parallel thinking via 2d probing},
  author={Zheng, Tong and Huang, Chengsong and Dai, Runpeng and He, Yun and Liu, Rui and Ni, Xin and Bao, Huiwen and Wang, Kaishen and Zhu, Hongtu and Huang, Jiaxin and others},
  journal={arXiv preprint arXiv:2602.03845},
  year={2026}
}

@article{zheng2026autotts,
  title={LLMs Improving LLMs: Agentic Discovery for Test-Time Scaling},
  author={Zheng, Tong and Liu, Haolin and Huang, Chengsong and Bao, Huiwen and Zhang, Sheng and Liu, Rui and Dai, Runpeng and Chen, Ruibo and Liu, Chenxi and Xiong, Tianyi and others},
  journal={arXiv preprint arXiv:2605.08083},
  year={2026}
}

@article{du2026swift,
  title={Why Search When You Can Transfer? Amortized Agentic Workflow Design from Structural Priors},
  author={Du, Shiyi and Liu, Jiayuan and Du, Weihua and Huang, Yue and Li, Jiayi and Luo, Yingtao and Zhang, Xiangliang and Conitzer, Vincent and Kingsford, Carl},
  journal={arXiv preprint arXiv:2604.25012},
  year={2026}
}

@article{yuan2026flowbank,
  title={FlowBank: Query-Adaptive Agentic Workflows Optimization through Precompute-and-Reuse},
  author={Yuan, Lingzhi and Deng, Chenghao and Yu, Fangxu and Chakraborty, Souradip and Rostami, Mohammad and Huang, Furong},
  journal={arXiv preprint arXiv:2606.11290},
  year={2026}
}

@article{wang2026every,
  title={Every rollout counts: Optimal resource allocation for efficient test-time scaling},
  author={Wang, Xinglin and Li, Yiwei and Feng, Shaoxiong and Yuan, Peiwen and Zhang, Yueqi and Shi, Jiayi and Tan, Chuyi and Pan, Boyuan and Hu, Yao},
  journal={Advances in Neural Information Processing Systems},
  volume={38},
  pages={102312--102338},
  year={2026}
}

@article{wang2026rse,
  title={Do not waste your rollouts: Recycling search experience for efficient test-time scaling},
  author={Wang, Xinglin and Shi, Jiayi and Feng, Shaoxiong and Yuan, Peiwen and Li, Yiwei and Zhang, Yueqi and Tan, Chuyi and Zhang, Ji and Pan, Boyuan and Hu, Yao and others},
  journal={arXiv preprint arXiv:2601.21684},
  year={2026}
}

@article{funsearch,
  title={Mathematical discoveries from program search with large language models},
  author={Romera-Paredes, Bernardino and Barekatain, Mohammadamin and Novikov, Alexander and Balog, Matej and Kumar, M Pawan and Dupont, Emilien and Ruiz, Francisco JR and Ellenberg, Jordan S and Wang, Pengming and Fawzi, Omar and others},
  journal={Nature},
  volume={625},
  number={7995},
  pages={468--475},
  year={2024},
  publisher={Nature Publishing Group UK London}
}

@article{eoh,
  title={Evolution of heuristics: Towards efficient automatic algorithm design using large language model},
  author={Liu, Fei and Tong, Xialiang and Yuan, Mingxuan and Lin, Xi and Luo, Fu and Wang, Zhenkun and Lu, Zhichao and Zhang, Qingfu},
  journal={arXiv preprint arXiv:2401.02051},
  year={2024}
}

@article{novikov2025alphaevolve,
  title={Alphaevolve: A coding agent for scientific and algorithmic discovery},
  author={Novikov, Alexander and V{\~u}, Ng{\^a}n and Eisenberger, Marvin and Dupont, Emilien and Huang, Po-Sen and Wagner, Adam Zsolt and Shirobokov, Sergey and Kozlovskii, Borislav and Ruiz, Francisco JR and Mehrabian, Abbas and others},
  journal={arXiv preprint arXiv:2506.13131},
  year={2025}
}

@inproceedings{hu2025automated,
  title={Automated design of agentic systems},
  author={Hu, Shengran and Lu, Cong and Clune, Jeff},
  booktitle={International Conference on Learning Representations},
  volume={2025},
  pages={21344--21377},
  year={2025}
}

@inproceedings{wan2025rasc,
  title={Reasoning Aware Self-Consistency: Leveraging Reasoning Paths for Efficient {LLM} Sampling},
  author={Wan, Guangya and Wu, Yuqi and Chen, Jie and Li, Sheng},
  booktitle={Proceedings of the 2025 Conference of the Nations of the Americas Chapter of the Association for Computational Linguistics: Human Language Technologies (Volume 1: Long Papers)},
  year={2025},
  pages={3613--3635},
  publisher={Association for Computational Linguistics},
  doi={10.18653/v1/2025.naacl-long.184},
  url={https://aclanthology.org/2025.naacl-long.184/}
}

@article{shinn2023reflexion,
  title={Reflexion: Language agents with verbal reinforcement learning},
  author={Shinn, Noah and Cassano, Federico and Gopinath, Ashwin and Narasimhan, Karthik and Yao, Shunyu},
  journal={Advances in neural information processing systems},
  volume={36},
  pages={8634--8652},
  year={2023}
}

@inproceedings{besta2024got,
  title={Graph of thoughts: Solving elaborate problems with large language models},
  author={Besta, Maciej and Blach, Nils and Kubicek, Ales and Gerstenberger, Robert and Podstawski, Michal and Gianinazzi, Lukas and Gajda, Joanna and Lehmann, Tomasz and Niewiadomski, Hubert and Nyczyk, Piotr and others},
  booktitle={Proceedings of the AAAI conference on artificial intelligence},
  volume={38},
  number={16},
  pages={17682--17690},
  year={2024}
}

@inproceedings{agrawal2026gepa,
  title={Gepa: Reflective prompt evolution can outperform reinforcement learning},
  author={Agrawal, Lakshya A and Tan, Shangyin and Soylu, Dilara and Ziems, Noah and Khare, Rishi and Opsahl-Ong, Krista and Singhvi, Arnav and Shandilya, Herumb and Ryan, Michael J and Jiang, Meng and others},
  booktitle={International Conference on Learning Representations},
  volume={2026},
  pages={8479--8565},
  year={2026}
}

@inproceedings{zhao2024expel,
  title={Expel: Llm agents are experiential learners},
  author={Zhao, Andrew and Huang, Daniel and Xu, Quentin and Lin, Matthieu and Liu, Yong-Jin and Huang, Gao},
  booktitle={Proceedings of the AAAI Conference on Artificial Intelligence},
  volume={38},
  number={17},
  pages={19632--19642},
  year={2024}
}

@inproceedings{ouyang2026reasoningbank,
  title={Reasoningbank: Scaling agent self-evolving with reasoning memory},
  author={Ouyang, Siru and Yan, Jun and Hsu, I and Chen, Yanfei and Jiang, Ke and Wang, Zifeng and Han, Rujun and Le, Long and Daruki, Samira and Tang, Xiangru and others},
  booktitle={International Conference on Learning Representations},
  volume={2026},
  pages={94327--94354},
  year={2026}
}

@article{wang2023voyager,
  title={Voyager: An open-ended embodied agent with large language models},
  author={Wang, Guanzhi and Xie, Yuqi and Jiang, Yunfan and Mandlekar, Ajay and Xiao, Chaowei and Zhu, Yuke and Fan, Linxi and Anandkumar, Anima},
  journal={arXiv preprint arXiv:2305.16291},
  year={2023}
}
